\documentclass{article}

\usepackage[preprint]{neurips_2026}

\usepackage[utf8]{inputenc} 
\usepackage[T1]{fontenc}    
\usepackage{hyperref}       
\usepackage{url}            
\usepackage{booktabs}       
\usepackage{amsfonts, amsmath, amssymb, amsthm}       
\usepackage{nicefrac}       
\usepackage{microtype}      
\usepackage{xcolor}         
\usepackage{algorithm}
\usepackage{algpseudocode}
\usepackage{float}
\usepackage{graphicx}
\usepackage[table]{xcolor}
\usepackage{subcaption}
\usepackage{cancel}
\usepackage{enumitem}

\usepackage{thmtools, thm-restate}

\newtheorem{proposition}{Proposition}
\newtheorem{lemma}{Lemma}
\newtheorem{remark}{Remark}

\newcommand{\diff}{\mathrm{d}}
\DeclareMathOperator*{\argmin}{arg\,min}

\title{Noise, Denoise, Correct: MCMC Posterior Sampling with Diffusion Priors in Three Steps}
\workshoptitle{AI for Stochastic Dynamics}

\author{%
  So Takao \qquad Gregory David Bellchambers \qquad Luke Ye \\
  \bf Sanmitra Ghosh$^{*}$ \qquad Michalis Michaelides$^{*}$ \\
  PhysicsX \\London, UK \\
  \texttt{so.takao@physicsx.ai} \\
  {\small $^{*}$Joint senior author}
}

\begin{document}

\maketitle

\begin{abstract}
  Pretrained diffusion models are powerful priors for inverse problems, but posterior sampling under nonlinear, non-differentiable forward models remain hard. We introduce diffusion waltz, an MCMC method using SDEdit-style noising–denoising as a proposal, corrected via Metropolis–Hastings for exact posterior sampling without prior evaluation. We further propose injecting observations into the proposal while preserving exactness, using a gradient-free ensemble Kalman update. On a non-differentiable Navier–Stokes initial condition recovery task, diffusion waltz outperforms existing baselines across different noise and nonlinearity regimes.
\end{abstract}

\section{Introduction}
Solving inverse problems is a central challenge across science and engineering, where the goal is to recover an unknown state $x_0$ from noisy, incomplete observations $y = \mathcal{A}(x_0) + \epsilon$; here, $\mathcal{A}$ is a known forward operator and $\epsilon \sim \mathcal{N}(0, \sigma_y^2I)$ for some $\sigma_y > 0$. Such observations typically do not carry enough information to recover $x_0$ exactly, rendering the problem ill-posed. The Bayesian formulation resolves this by placing a prior $p_0(x_0)$ over the state and targeting the posterior \citep{stuart2010inverse}:
\begin{align}\label{eq:posterior}
    p(x_0 \mid y) \propto p(y \mid x_0) \, p_0(x_0).
\end{align}
A growing body of work seeks to use pretrained generative models as priors \citep{daras2024survey}, but posterior sampling remains elusive when the prior is accessible only through simulation, with no tractable density. Methods such as classifier and classifier-free guidance \citep{dhariwal2021diffusion, ho2022classifier} can sample from (\ref{eq:posterior}) approximately. However, they require additional training and cannot adapt to new forward models. Inference-time guidance methods \citep{chung2022diffusion, song2023pseudoinverse, rozet2024learning, kim2025flowdps, zhang2025improving} adapt to new problems without retraining, but rely on approximations to the likelihood score, leading to biased posterior approximations; unlike in computer vision, where many of these methods were originally devised, scientific applications often demand accurate posterior estimation. Furthermore, many of these methods rely on taking gradients through the likelihood, requiring $\mathcal{A}$ to be differentiable -- an assumption that fails in many realistic applications.

To address this, we propose {\em diffusion waltz}, a three-step cycle to sample from (\ref{eq:posterior}) via Markov Chain Monte Carlo (MCMC), which targets the posterior exactly, assuming a perfect denoiser. Our proposal kernel $q(x_0' \mid x_0)$, inspired by SDEdit \citep{mengsdedit}, is constructed by (i) noising $x_0$ to an appropriate level $t$, then (ii) denoising back to level $0$ to yield a new sample $x_0'$. This is followed by (iii) a Metropolis-Hastings (MH) accept/reject step, where the acceptance probability crucially does not depend on the prior density $p_0(x_0)$.
To accelerate convergence, we further propose a way to inject information about $y$ into the proposal without requiring differentiability of $\mathcal{A}$, such that the modified proposal still yields an exact MH correction that does not involve the prior density.

\section{Method}

\subsection{Diffusion waltz for posterior sampling}
We first describe the basic variant of diffusion waltz, which we refer to as {\em random walk Metropolis-Hastings (RWMH) waltz}.
Let $p_0, p_1$ be two probability distributions, representing the data and reference distributions, respectively. We consider a stochastic interpolant $x_t = \alpha_t x_0 + \beta_t x_1$ for $t \in [0, 1]$, where $x_0 \sim p_0$, $x_1 \sim p_1$, and $\alpha_t, \beta_t$ are $C^2$-smooth, monotonic functions satisfying $\alpha_0 = \beta_1 = 1$ and $\alpha_1 = \beta_0 = 0$, e.g., $\alpha_t = 1 - t$ and $\beta_t = t$. 
A generative SDE/ODE can then be trained to sample from $p_\theta(x_0) \approx p_0(x_0)$ via score/flow matching \citep{song2020score,lipman2022flow,albergo2025stochastic} (see Appendix~\ref{app:stochastic-interpolants}).

For a fixed $t \in (0, 1)$ and an initial sample $x_0 \sim p_\theta$, RWMH waltz cycles through three steps:
\begin{enumerate}
    \item Add noise to $x_0$ via the interpolant $x_t = \alpha_t x_0 + \beta_t x_1$, where $x_1 \sim p_1$.
    \item Denoise $x_t$ by proposing $x_0' \sim p_\theta(x_0 \mid x_t) \propto p(x_t \mid x_0) \, p_\theta(x_0)$.
    \item Accept $x_0'$ with probability $\alpha = \min\Big\{1, \,\frac{p(y \mid x_0')}{p(y \mid x_0)}\Big\}$ and set $x_0 \leftarrow x_0'$ to repeat the cycle.
\end{enumerate}
The first two steps -- perturbing $x_0$ via noising/denoising to produce a new sample $x_0'$ that remains close to it on the data manifold -- is common in image editing, as exemplified by SDEdit \citep{mengsdedit}. Here we repurpose this idea to define a proposal kernel $q(x_0' \mid x_0)$ for MCMC, allowing random walk-like exploration of the data manifold. The third step is a Metropolis–Hastings (MH) correction, which guarantees that the resulting chain converges to the posterior as the number of iterations grows. Crucially, the acceptance probability in this step depends {\em only} on the likelihood; thus, we never have to evaluate the prior density $p_\theta$. This follows from the reversibility of the proposal kernel with respect to the generative prior, i.e., $q(x_0' \mid x_0)p_\theta(x_0) = q(x_0 \mid x_0')p_\theta(x_0')$, which cancels the prior terms in the acceptance ratio (see Appendix \ref{app:rwmh-acceptance}).

In principle, RWMH waltz samples exactly from the posterior. In practice, however, the denoising step requires approximation; e.g., by 
solving a denoising SDE via Euler-Maruyama (see Appendix~\ref{eq:sde-choice}), or by directly targeting the denoising kernel $p(x_0 \mid x_t)$ using recent advances in stochastic flow maps \citep{potaptchik2026meta, holderrieth2026diamond, mccallum2026strong, pan2026maps}. The SDE discretization error in the former approach can, in principle, be corrected by combining predictor–corrector-style sampling with further MH corrections \citep{lam2026metropolis}; in practice, though, this is usually unnecessary to obtain satisfactory results.

\subsection{Conditional noising}

We now propose an extension that takes into account observation information into the proposal. Since the RWMH waltz proposal explores the data manifold randomly around the current point $x_0$ in the data manifold, it typically suffers from low acceptance rates and slow mixing, much like standard RWMH. Typically, this issue is alleviated by introducing observation information directly into the proposal; e.g., MALA \citep{roberts1996exponential} and HMC \citep{duane1987hybrid}. We seek an analogous approach within the diffusion waltz framework.

To this end, we consider adding information about $y$ into the {\em noising step} of RWMH waltz. In its most general form, the noising step is replaced by $x_t \sim q(x_t \mid x_0, y)$ for an arbitrary noising kernel $q$ that depends on $y$. Remarkably, the resulting MH acceptance probability still does not require evaluating the prior density $p_\theta$, making the method fully tractable provided $q$ can be evaluated (see line 3 in Algorithm~\ref{alg:conditional-noising-waltz} and Appendix~\ref{app:cn-waltz-acceptance} for the derivation of this acceptance probability).

\subsection{The ensemble Kalman conditional noising (EKCN) kernel}

As an example of such a conditional noising kernel $q(x_t \mid x_0, y)$, consider
\begin{align}
x_t &= \alpha_t \big(x_0 + g(x_0, y; h)\big)+ \beta_t x_1, \quad x_1 \sim p_1,
\end{align}
where $g(x_0, y; h)$ nudges $\mathcal{A}(x_0)$ towards $y$ with learning rate $h > 0$. In the {\em ensemble Kalman conditional noising (EKCN) kernel}, we take $g$ to be an ensemble Kalman Levenberg-Marquardt update rule \citep[Algorithm 13.15]{sanz2023inverse}
\begin{align}
    g(x_0, y; h) = K(h) \big(y - \mathcal{A}(x_0)\big),
\end{align}
where the {\em Kalman gain} $K(h)$ can be estimated from an ensemble of particles without requiring gradients of $\mathcal{A}$ (see Appendix \ref{app:ensemble-kalman}). In practice, we run an ensemble of $J$ MCMC chains in parallel to compute $K(h)$ and re-use it to compute the MH acceptance probability (see Appendix~\ref{app:ekcn-kernel}). Adaptive tuning of $t, h$ via Robbins-Monro is also applied, allowing smaller steps as chains converge (see Appendix \ref{app:adaptive-tuning}).
EKCN waltz can thus be understood as cycling over (i) a Kalman update to minimise data misfit, (ii) SDEdit to project onto the data manifold, and (iii) MH correction.

\begin{figure}[t]
\centering
\begin{minipage}[t]{0.46\textwidth}
\begin{algorithm}[H]
\caption{\small Diffusion waltz (random walk)}
\begin{algorithmic}[1]
\small
    \State Add noise $x_t \sim p(x_t \mid x_0)$ via interpolant
    \State Denoise $x_0' \sim p_\theta(x_0 \mid x_t)$
    \State Accept $x_0'$ with probability
    \Statex \qquad\qquad $\alpha = \min\left\{1, \dfrac{p(y \mid x_0')}{p(y \mid x_0)}\right\}$
    \Statex (Set $x_0 \leftarrow x_0'$ if accepted, else keep $x_0$)
\end{algorithmic}
\label{alg:random-walk-waltz}
\label{alg:rwmh-waltz}
\end{algorithm}
\end{minipage}
\hfill
\begin{minipage}[t]{0.52\textwidth}
\begin{algorithm}[H]
\caption{\small Diffusion waltz (conditional noising)}
\begin{algorithmic}[1]
\small
    \State Propose $x_t \sim q(x_t \mid x_0, y)$ for a chosen kernel $q$
    \State Denoise $x_0' \sim p_\theta(x_0 \mid x_t)$
    \State Accept $(x_t, x_0')$ with probability
    \Statex $\alpha = \min\left\{1, \dfrac{p(y \mid x_0')\,q(x_t \mid x_0', y)\,p(x_t \mid x_0)}{p(y \mid x_0)\,q(x_t \mid x_0, y)\,p(x_t \mid x_0')}\right\}$
    \Statex (Set $x_0 \leftarrow x_0'$ if accepted, else keep $x_0$)
\end{algorithmic}
\label{alg:conditional-noising-waltz}
\end{algorithm}
\end{minipage}
\end{figure}

\section{Related works}
The RWMH waltz proposal (Algorithm~\ref{alg:random-walk-waltz}) has been considered in concurrent work \citep{kang2026sampling, phunyaphibarn2026reward}, but mechanisms to inject likelihood information into the proposal have not yet been explored.
The idea of avoiding prior density computation by designing reversible proposals underlies preconditioned Crank–Nicolson (pCN) and pCN-Langevin \citep{cotter2013mcmc}.
In addition to conditional noising, another way to inject information about $y$ into the waltz proposal is to add it during the denoising step; similar ideas have been explored by DAPS \citep{zhang2025improving} and RePS \citep{ahmed2025solving}, which cycles through noising and conditional de-noising, although these require gradients of $\mathcal{A}$.
Other MCMC-based posterior samplers with pretrained diffusion priors include sequential Monte Carlo \citep{wu2023practical, cardoso2024monte, singhal2025general}, split-Gibbs \citep{coeurdoux2024plug, wu2024principled, zheng2025blade}, and elliptical slice sampling \citep{kalaivanan2025ess}, several of which we compare against in the next section.

\section{Experiments}\label{sec:experiments}

We consider the Navier-Stokes initial condition recovery problem to validate the performance of diffusion waltz against various baselines.
An initial vorticity field $\omega_0$ is evolved forward in time by a Navier-Stokes pseudo-spectral solver to obtain $\omega_T$. This is then downsampled and corrupted by Gaussian noise to generate noisy observations $y$. The goal of the experiment is to reconstruct the initial vorticity $\omega_0$ given $y$, i.e., infer $p_\theta(\omega_0 \mid y) \propto p(y \mid \omega_0)p_\theta(\omega_0)$, where the diffusion prior $p_\theta(\omega_0)$ is  trained on long time trajectories of $\omega_t$. 
We explore different regimes in terms of the level of nonlinearity (controlled by $T$) and noise intensity (controlled by $\sigma_y$).
Details in Appendix \ref{app:experiment-details}.

\paragraph{Baseline comparisons.} Since we use a non-differentiable numerical solver to simulate the Navier-Stokes system, we restrict ourselves to derivative-free methods. We compare both RWMH waltz (DW-RWMH) and EKCN waltz (DW-EKCN) against a mix of standard and state-of-the-art derivative-free methods, including EKI \citep{kovachki2019ensemble}, EnKG \citep{zheng2024ensemble}, FK Steering \citep{singhal2025general}, ESS-Flow \citep{kalaivanan2025ess}, Blade \citep{zheng2025blade} and MFM-GF \citep{potaptchik2026meta}. Details of the baselines are provided in Appendix \ref{app:baselines}.

We display results in Table \ref{tab:ns-results}, where we compare the relative $\ell_2$ score (Rel $\ell_2$), continuous ranked probability score (CRPS), and spread-skill ratio (SSR), providing a mix of reconstruction and probabilistic metrics (see Appendix~\ref{app:metricx}). Visualizations in the high nonlinearity / high noise regime are shown in Figure \ref{fig:results-t5-s1.0} (for other regimes, see Figures \ref{fig:results-t1-s0.1}--\ref{fig:results-t5-s0.1} in the appendix). Overall, DW-EKCN performs strongly on all scores across all regimes, consistently ranking first or second. We note the significant improvement over EKI, highlighting the importance of the SDEdit+MH step in DW for keeping ensemble members on-manifold. EnKG, another ensemble Kalman-based method, performs reasonably in low-noise regimes but suffers from low spread as seen by the low SSR; in the high noise regime, it performs poorly. Blade, an improvement over EnKG that combines EKI-like statistical linearisation with split-Gibbs sampling, performs competitively, especially at low nonlinearity; however, it suffers from slow inference speed due to the need for multiple forward model passes during Langevin sampling. ESS-Flow, despite strong performance, also suffers from slow inference speed due to the large number of forward passes required during shrinking. Finally, FK Steering and MFM-GF, which rely on importance sampling, both exhibit weight collapse, leading to poor performance even in the low nonlinearity / low noise regime.

For comparisons against standard gradient-based guidance methods (e.g. DPS) using a differentiable neural surrogate for the NS system, see Appendix~\ref{app:gradient-based-approaches}; we see that DW-EKCN and other MCMC-based approaches (ESS-Flow, Blade) mostly outperform these methods.

\vspace{-0.3em}

\paragraph{Effect of conditional noising.} Comparing the two DW variants, we see that conditional noising is instrumental in accelerating convergence to the stationary distribution. We display trace plots of the potential $U(x_0) = -\log p(y \mid x_0)$ in Figure \ref{fig:ns-mcmc-trace}, showing that the slow convergence of RWMH waltz is due to many chains getting stuck early on, especially in the highly nonlinear setting ($T=5$). EKCN guides each ensemble member towards high-likelihood regions, leading to rapid convergence within the first $\sim 50$ MCMC steps. This also leads to high acceptance rates early in the iteration (Figure \ref{fig:ns-acc-rate}), although the rate decays rapidly once the chains converge; e.g., in the high-noise setting, it settles to between $10\sim20\%$, and in the low-noise regime, less than $5\%$. Investigating methods to maintain high acceptance rates is a promising direction for future work.

\begin{table}[t]
\centering
\caption{\small Mean Rel-$\ell_2$, CRPS and SSR across five unseen test examples (for an expanded version with standard deviations and gradient-based baselines, see Tables \ref{tab:t1-results}--\ref{tab:t5-results}). Best results in {\bf{bold}}, second best \underline{underlined}.}
\resizebox{\textwidth}{!}{%
\begin{tabular}{l ccc ccc ccc ccc}
\toprule
 & \multicolumn{6}{c}{Low nonlinearity ($T=1$)} & \multicolumn{6}{c}{High nonlinearity ($T=5$)} \\
\cmidrule(lr){2-7} \cmidrule(lr){8-13}
 & \multicolumn{3}{c}{Low noise} & \multicolumn{3}{c}{High noise} & \multicolumn{3}{c}{Low noise} & \multicolumn{3}{c}{High noise} \\
\cmidrule(lr){2-4} \cmidrule(lr){5-7} \cmidrule(lr){8-10} \cmidrule(lr){11-13}
\textbf{Methods} & Rel $\ell_2\downarrow$ & CRPS$\downarrow$ & SSR$\rightarrow 1$ & Rel $\ell_2\downarrow$ & CRPS$\downarrow$ & SSR$\rightarrow 1$ & Rel $\ell_2\downarrow$ & CRPS$\downarrow$ & SSR$\rightarrow 1$ & Rel $\ell_2\downarrow$ & CRPS$\downarrow$ & SSR$\rightarrow 1$ \\
\midrule
EKI                & 0.517 & 0.370 & 0.024 & 0.832 & 0.560 & 0.126 & 1.059 & 0.754 & 0.031 & 0.865 & 0.522 & 0.267 \\
EnKG               & 0.129 & 0.086 & 0.076 & 0.617 & 0.414 & 0.071 & \underline{0.418} & 0.279 & 0.098 & 0.812 & 0.523 & 0.084 \\
FK Steering    & 0.787 & 0.538 & 0.012 & 0.782 & 0.500 & 0.101 & 1.383 & 1.000 & 0.016 & 0.992 & 0.511 & 1.084  \\
ESS-Flow           & 0.178 & 0.080 & 1.405 & \underline{0.248} & \underline{0.118} & 1.179 & 0.506 & 0.242 & \bf{0.987} & 0.491 & 0.233 &\bf{0.987} \\
Blade              & \bf{0.069} & \bf{0.037} & 1.797 & 0.296 & 0.139 & \bf{0.982} & 0.421 & \underline{0.239} & 2.312 & \underline{0.442} & \underline{0.231} & 1.326 \\
MFM-GF             & 0.406 & 0.191 & \bf{1.395} & 0.425 & 0.202 & 1.406 & 0.549 & 0.262 & 1.245 & 0.547 & 0.261 & 1.253 \\
\rowcolor{gray!15}
DW-RWMH            & 0.451 & 0.177 & 1.703 & 0.550 & 0.229 & 1.476 & 0.810 & 0.398 & 1.163 & 0.815 & 0.396 & 1.155 \\
\rowcolor{gray!15}
DW-EKCN       & \underline{0.086} & \underline{0.045} & \underline{1.396} & \bf{0.220} & \bf{0.101} & \underline{1.081} & \bf{0.247} & \bf{0.152} & \underline{1.054} & \bf{0.435} & \bf{0.207} & \underline{0.945} \\
\bottomrule
\end{tabular}%
\label{tab:ns-results}
}
\end{table}

\begin{figure}[t]
    \centering
    \includegraphics[width=1.0\textwidth]{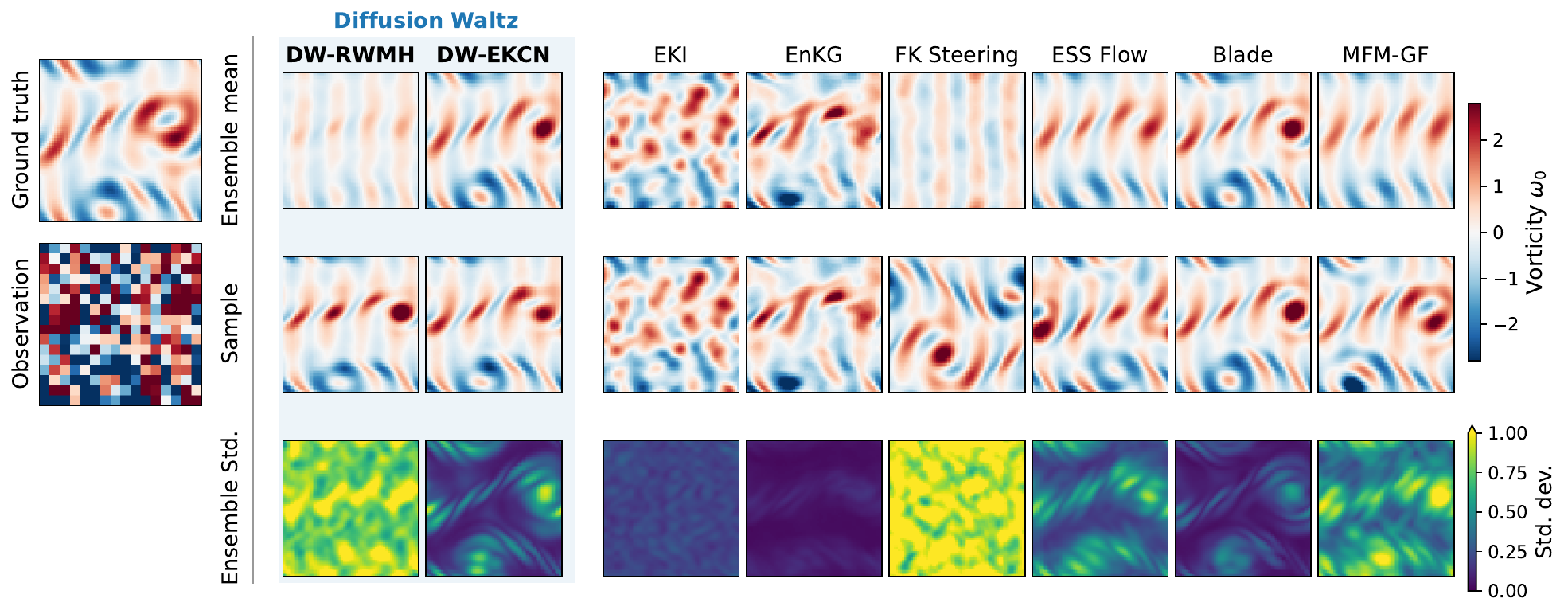}
    \caption{\small Results on the Navier-Stokes (NS) initial condition recovery problem in the high nonlinearity / high noise regime. Despite the heavily corrupted observation, diffusion waltz with ensemble Kalman conditional noising (DW-EKCN) recovers the ground truth accurately with well-calibrated uncertainties.}
    \label{fig:results-t5-s1.0}
\end{figure}

\vspace{-0.3em}
\section{Conclusion}
Diffusion waltz provides a simple plug-and-play framework for posterior sampling via MCMC that is asymptotically exact (assuming a perfect denoiser) and handles non-differentiable forward models. Its conditional noising extension injects observation information into the proposal while preserving this exactness, substantially improving mixing. On Navier–Stokes initial condition recovery, diffusion waltz matches or outperforms strong derivative-free baselines in both accuracy and calibration.

\begin{ack}
We would like to thank Martin Accou, Andrew Boardman, Thomas Gessey-Jones, Ayush Jain, James-Michael Leahy and Daniel Owen-Lloyd for the insightful discussions and careful review of the manuscript.

\href{https://credit.niso.org}{CRediT} author statement: \textbf{So Takao:} Conceptualization, Methodology, Software, Investigation,
Writing - Original Draft, Visualization. \textbf{Gregory David Bellchambers:} Software, Investigation. \textbf{Luke Ye:} Software, Investigation. \textbf{Sanmitra Ghosh}: Conceptualization, Resources, Writing - Original Draft,
Supervision, Project administration. \textbf{Michalis Michaelides}: Conceptualization, Resources, Writing - Original Draft,
Supervision, Project administration.
\end{ack}

\bibliographystyle{plainnat}
\bibliography{biblio}

@article{kang2026sampling,
  title={Sampling Data with Chains of Forward-Backward Diffusion Steps},
  author={Kang, Hyunmo and Levi, Noam Itzhak and Wegner, Corinna Elena and Korchinski, Daniel J and Wyart, Matthieu},
  journal={arXiv preprint arXiv:2605.27006},
  year={2026}
}

@article{phunyaphibarn2026reward,
  title={Reward-Guided Discrete Diffusion via Clean-Sample {M}arkov Chain for Molecule and Biological Sequence Design},
  author={Phunyaphibarn, Prin and Sung, Minhyuk},
  journal={arXiv preprint arXiv:2602.09424},
  year={2026}
}

@article{albergo2025stochastic,
  title={Stochastic interpolants: {A} unifying framework for flows and diffusions},
  author={Albergo, Michael and Boffi, Nicholas M and Vanden-Eijnden, Eric},
  journal={Journal of Machine Learning Research},
  volume={26},
  number={209},
  pages={1--80},
  year={2025}
}

@article{cotter2013mcmc,
  title={{MCMC} methods for functions: modifying old algorithms to make them faster},
  author={Cotter, Simon L and Roberts, Gareth O and Stuart, Andrew M and White, David},
  journal={Statistical Science},
  pages={424--446},
  year={2013},
  publisher={JSTOR}
}

@inproceedings{mengsdedit,
  title={{SDE}dit: Guided Image Synthesis and Editing with Stochastic Differential Equations},
  author={Meng, Chenlin and He, Yutong and Song, Yang and Song, Jiaming and Wu, Jiajun and Zhu, Jun-Yan and Ermon, Stefano},
  booktitle={International Conference on Learning Representations},
  year={2022}
}

@article{potaptchik2026meta,
  title={Meta flow maps enable scalable reward alignment},
  author={Potaptchik, Peter and Saravanan, Adhi and Mammadov, Abbas and Prat, Alvaro and Albergo, Michael S and Teh, Yee Whye},
  journal={arXiv preprint arXiv:2601.14430},
  year={2026}
}

@article{holderrieth2026diamond,
  title={Diamond maps: {E}fficient reward alignment via stochastic flow maps},
  author={Holderrieth, Peter and Chen, Douglas and Eyring, Luca and Shah, Ishin and Anantharaman, Giri and He, Yutong and Akata, Zeynep and Jaakkola, Tommi and Boffi, Nicholas Matthew and Simchowitz, Max},
  journal={arXiv preprint arXiv:2602.05993},
  year={2026}
}

@article{lam2026metropolis,
  title={Metropolis-Adjusted Diffusion Models},
  author={Lam, Kevin H and Farghly, Tyler and Williams, Christopher and Yang, Jun and Teh, Yee Whye and Doucet, Arnaud},
  journal={arXiv preprint arXiv:2605.09654},
  year={2026}
}

@article{mccallum2026strong,
  title={Strong stochastic flow maps},
  author={McCallum, Sam and Blasingame, Zander W and Herschell, Timothy and Rindtorff, Niklas and Tong, Alexander and Foster, James},
  journal={arXiv preprint arXiv:2606.01086},
  year={2026}
}

@article{pan2026maps,
  title={It\^o maps for any-step {SDEs}},
  author={Pan, Zhengkai and Potaptchik, Peter and Yao, Wenxi and Albergo, Michael S and Pidstrigach, Jakiw},
  journal={arXiv preprint arXiv:2606.11156},
  year={2026}
}

@article{ahmed2025solving,
  title={Solving Diffusion Inverse Problems with Restart Posterior Sampling},
  author={Ahmed, Bilal and Makin, Joseph G},
  journal={arXiv preprint arXiv:2511.20705},
  year={2025}
}

@article{roberts1996exponential,
  title={Exponential convergence of {L}angevin distributions and their discrete approximations},
  author={Roberts, Gareth O and Tweedie, Richard L},
  journal={Bernoulli},
  year={1996}
}

@article{iglesias2013ensemble,
  title={Ensemble {K}alman methods for inverse problems},
  author={Iglesias, Marco A and Law, Kody JH and Stuart, Andrew M},
  journal={Inverse Problems},
  volume={29},
  number={4},
  pages={045001},
  year={2013},
  publisher={IOP Publishing}
}

@article{kovachki2019ensemble,
  title={Ensemble {K}alman inversion: a derivative-free technique for machine learning tasks},
  author={Kovachki, Nikola B and Stuart, Andrew M},
  journal={Inverse Problems},
  volume={35},
  number={9},
  pages={095005},
  year={2019},
  publisher={IOP Publishing}
}

@article{rozet2024learning,
  title={Learning diffusion priors from observations by expectation maximization},
  author={Rozet, Fran{\c{c}}ois and Andry, G{\'e}r{\^o}me and Lanusse, Fran{\c{c}}ois and Louppe, Gilles},
  journal={Advances in Neural Information Processing Systems},
  volume={37},
  pages={87647--87682},
  year={2024}
}

@inproceedings{zhang2025improving,
  title={Improving diffusion inverse problem solving with decoupled noise annealing},
  author={Zhang, Bingliang and Chu, Wenda and Berner, Julius and Meng, Chenlin and Anandkumar, Anima and Song, Yang},
  booktitle={2025 IEEE/CVF Conference on Computer Vision and Pattern Recognition (CVPR)},
  pages={20895--20905},
  year={2025},
  organization={IEEE}
}

@article{wu2023practical,
  title={Practical and asymptotically exact conditional sampling in diffusion models},
  author={Wu, Luhuan and Trippe, Brian and Naesseth, Christian and Blei, David and Cunningham, John P},
  journal={Advances in Neural Information Processing Systems},
  volume={36},
  pages={31372--31403},
  year={2023}
}

@article{singhal2025general,
  title={A general framework for inference-time scaling and steering of diffusion models},
  author={Singhal, Raghav and Horvitz, Zachary and Teehan, Ryan and Ren, Mengye and Yu, Zhou and McKeown, Kathleen and Ranganath, Rajesh},
  journal={arXiv preprint arXiv:2501.06848},
  year={2025}
}

@article{coeurdoux2024plug,
  title={Plug-and-play split {G}ibbs sampler: embedding deep generative priors in {B}ayesian inference},
  author={Coeurdoux, Florentin and Dobigeon, Nicolas and Chainais, Pierre},
  journal={IEEE Transactions on Image Processing},
  volume={33},
  pages={3496--3507},
  year={2024},
  publisher={IEEE}
}

@article{wu2024principled,
  title={Principled probabilistic imaging using diffusion models as plug-and-play priors},
  author={Wu, Zihui and Sun, Yu and Chen, Yifan and Zhang, Bingliang and Yue, Yisong and Bouman, Katherine L},
  journal={Advances in Neural Information Processing Systems},
  volume={37},
  pages={118389--118427},
  year={2024}
}

@article{zheng2025blade,
  title={Blade: A derivative-free {B}ayesian inversion method using diffusion priors},
  author={Zheng, Hongkai and Wang, Austin and Wu, Zihui and Huang, Zhengyu and Baptista, Ricardo and Yue, Yisong},
  journal={arXiv preprint arXiv:2510.10968},
  year={2025}
}

@article{kalaivanan2025ess,
  title={{ESS-Flow}: {T}raining-free guidance of flow-based models as inference in source space},
  author={Kalaivanan, Adhithyan and Zhao, Zheng and Sj{\"o}lund, Jens and Lindsten, Fredrik},
  journal={arXiv preprint arXiv:2510.05849},
  year={2025}
}

@inproceedings{chung2022diffusion,
  title={Diffusion posterior sampling for general noisy inverse problems},
  author={Chung, Hyungjin and Kim, Jeongsol and Mccann, Michael T and Klasky, Marc L and Ye, Jong Chul},
  booktitle={International Conference on Learning Representations},
  year={2023}
}

@article{stuart2010inverse,
  title={Inverse problems: a {B}ayesian perspective},
  author={Stuart, Andrew M},
  journal={Acta numerica},
  volume={19},
  pages={451--559},
  year={2010},
  publisher={Cambridge University Press}
}

@article{dhariwal2021diffusion,
  title={Diffusion models beat {GANs} on image synthesis},
  author={Dhariwal, Prafulla and Nichol, Alexander},
  journal={Advances in neural information processing systems},
  volume={34},
  pages={8780--8794},
  year={2021}
}

@article{ho2022classifier,
  title={Classifier-free diffusion guidance},
  author={Ho, Jonathan and Salimans, Tim},
  journal={arXiv preprint arXiv:2207.12598},
  year={2022}
}

@article{daras2024survey,
  title={A survey on diffusion models for inverse problems},
  author={Daras, Giannis and Chung, Hyungjin and Lai, Chieh-Hsin and Mitsufuji, Yuki and Ye, Jong Chul and Milanfar, Peyman and Dimakis, Alexandros G and Delbracio, Mauricio},
  journal={arXiv preprint arXiv:2410.00083},
  year={2024}
}

@inproceedings{song2023pseudoinverse,
  title={Pseudoinverse-guided diffusion models for inverse problems},
  author={Song, Jiaming and Vahdat, Arash and Mardani, Morteza and Kautz, Jan},
  booktitle={International conference on learning representations},
  year={2023}
}

@inproceedings{cardoso2024monte,
  title={Monte Carlo guided Denoising Diffusion models for {B}ayesian linear inverse problems.},
  author={Cardoso, Gabriel and Le Corff, Sylvain and Moulines, Eric and others},
  booktitle={International Conference on Learning Representations},
  volume={2024},
  pages={44001--44037},
  year={2024}
}

@inproceedings{kim2025flowdps,
  title={Flow{DPS}: Flow-driven posterior sampling for inverse problems},
  author={Kim, Jeongsol and Kim, Bryan Sangwoo and Ye, Jong Chul},
  booktitle={2025 IEEE/CVF International Conference on Computer Vision (ICCV)},
  pages={12328--12337},
  year={2025},
  organization={IEEE}
}

@article{li2022learning,
  title={Learning chaotic dynamics in dissipative systems},
  author={Li, Zongyi and Liu-Schiaffini, Miguel and Kovachki, Nikola and Azizzadenesheli, Kamyar and Liu, Burigede and Bhattacharya, Kaushik and Stuart, Andrew and Anandkumar, Anima},
  journal={Advances in Neural Information Processing Systems},
  volume={35},
  pages={16768--16781},
  year={2022}
}

@article{duane1987hybrid,
  title={Hybrid {M}onte {C}arlo},
  author={Duane, Simon and Kennedy, Anthony D and Pendleton, Brian J and Roweth, Duncan},
  journal={Physics letters B},
  volume={195},
  number={2},
  pages={216--222},
  year={1987},
  publisher={Elsevier}
}

@article{zheng2024ensemble,
  title={Ensemble {K}alman diffusion guidance: A derivative-free method for inverse problems},
  author={Zheng, Hongkai and Chu, Wenda and Wang, Austin and Kovachki, Nikola and Baptista, Ricardo and Yue, Yisong},
  journal={Transactions on Machine Learning Research},
  year={2025}
}

@article{andrieu2008tutorial,
  title={A tutorial on adaptive {MCMC}},
  author={Andrieu, Christophe and Thoms, Johannes},
  journal={Statistics and computing},
  volume={18},
  number={4},
  pages={343--373},
  year={2008},
  publisher={Springer}
}

@book{sanz2023inverse,
  title={Inverse problems and data assimilation},
  author={Sanz-Alonso, Daniel and Stuart, Andrew and Taeb, Armeen},
  volume={107},
  year={2023},
  publisher={Cambridge University Press}
}

@article{kerrigan2023functional,
  title={Functional flow matching},
  author={Kerrigan, Gavin and Migliorini, Giosue and Smyth, Padhraic},
  journal={arXiv preprint arXiv:2305.17209},
  year={2023}
}

@inproceedings{li2020fourier,
  title={Fourier neural operator for parametric partial differential equations},
  author={Li, Zongyi and Kovachki, Nikola and Azizzadenesheli, Kamyar and Liu, Burigede and Bhattacharya, Kaushik and Stuart, Andrew and Anandkumar, Anima},
  booktitle={International Conference on Learning Representations},
  year={2021}
}

@inproceedings{murray2010elliptical,
  title={Elliptical slice sampling},
  author={Murray, Iain and Adams, Ryan and MacKay, David},
  booktitle={Proceedings of the thirteenth international conference on artificial intelligence and statistics},
  pages={541--548},
  year={2010},
  organization={JMLR Workshop and Conference Proceedings}
}

@article{bellchambers2025exploiting,
  title={Exploiting the Exact Denoising Posterior Score in Training-Free Guidance of Diffusion Models},
  author={Bellchambers, Gregory},
  journal={arXiv preprint arXiv:2506.13614},
  year={2025}
}

@inproceedings{zheng2025inversebench,
  title={Inverse{B}ench: Benchmarking plug-and-play diffusion priors for inverse problems in physical sciences},
  author={Zheng, Hongkai and Chu, Wenda and Zhang, Bingliang and Wu, Zihui and Wang, Austin and Feng, Berthy and Zou, Caifeng and Sun, Yu and Kovachki, Nikola and Ross, Zachary and others},
  booktitle={International Conference on Learning Representations},
  volume={2025},
  pages={90912--90940},
  year={2025}
}

@article{schillings2017analysis,
  title={Analysis of the ensemble {K}alman filter for inverse problems},
  author={Schillings, Claudia and Stuart, Andrew M},
  journal={SIAM Journal on Numerical Analysis},
  volume={55},
  number={3},
  pages={1264--1290},
  year={2017},
  publisher={SIAM}
}

@article{evensen2003ensemble,
  title={The ensemble {K}alman filter: Theoretical formulation and practical implementation},
  author={Evensen, Geir},
  journal={Ocean dynamics},
  volume={53},
  number={4},
  pages={343--367},
  year={2003},
  publisher={Springer}
}

@inproceedings{song2020score,
  title={Score-based generative modeling through stochastic differential equations},
  author={Song, Yang and Sohl-Dickstein, Jascha and Kingma, Diederik P and Kumar, Abhishek and Ermon, Stefano and Poole, Ben},
  booktitle={International Conference on Learning Representations},
  year={2021}
}

@inproceedings{lipman2022flow,
  title={Flow matching for generative modeling},
  author={Lipman, Yaron and Chen, Ricky TQ and Ben-Hamu, Heli and Nickel, Maximilian and Le, Matt},
  booktitle={International Conference on Learning Representations},
  year={2023}
}

@article{goodman2010ensemble,
  title={Ensemble samplers with affine invariance},
  author={Goodman, Jonathan and Weare, Jonathan},
  journal={Communications in applied mathematics and computational science},
  volume={5},
  number={1},
  pages={65--80},
  year={2010},
  publisher={Mathematical Sciences Publishers}
}

@article{yao2026guided,
  title={Guided diffusion sampling on function spaces with applications to {PDE}s},
  author={Yao, Jiachen and Mammadov, Abbas and Berner, Julius and Kerrigan, Gavin and Ye, Jong Chul and Azizzadenesheli, Kamyar and Anandkumar, Animashree},
  journal={Advances in Neural Information Processing Systems},
  volume={38},
  pages={127057--127094},
  year={2026}
}

\appendix

\section{Stochastic interpolants}\label{app:stochastic-interpolants}
Given measures $p_0$ and $p_1$, a stochastic interpolant is defined by the stochastic process
\begin{align}\label{eq:stochastic-interpolant}
    x_t = \alpha_t x_0 + \beta_t x_1,
\end{align}
where $x_0 \sim p_0$, $x_1 \sim p_1$, and $\alpha_t, \beta_t$ for $t \in [0, 1]$ are arbitrary  monotonic $C^2$ functions satisfying the endpoint conditions $\alpha_0 = \beta_1 = 1$ and $\alpha_1 = \beta_0 = 0$. By a slight extension of \cite[Corollary 18]{albergo2025stochastic}, for any positive definite $\{C_t\}_{t \in [0,1]}$, we have that the SDE 
\begin{align}\label{eq:backward-sde}
    \mathrm{d} x_t = \left(b(t, x_t) - \frac{1}{2} C_t s(t, x_t)\right) \mathrm{d} t + \sqrt{C_t} \mathrm{d}W_t,
\end{align}
solved backwards in time from $x_1 \sim p_1$ shares the same marginal law $p(x_t)$ as the forward interpolant (\ref{eq:stochastic-interpolant}). Here, $b(t, x_t) := \mathbb{E}[\dot{x}_t | x_t]$ is the marginal velocity (as used in flow matching) and $s(t, x_t) := \nabla_{x_t} \log p(x_t)$ is the score function. While both $b(t, x_t)$ and $s(t, x_t)$ are intractable, these can be approximated by neural networks $b_\theta(t, x_t)$, $s_\theta(t, x_t)$, respectively, by minimizing the losses
\begin{align}
    \mathcal{L}_b(\theta)&= \mathbb{E}_{x_0 \sim p_0, x_1 \sim p_1, t\sim\mathcal{U}([0,1])}\Big[\|b_\theta(t, x_t) - 
    (\dot{\alpha}_t x_0 + \dot{\beta}_t x_1)\|^2\Big],  \label{eq:fm-objective}\\
    \mathcal{L}_s(\theta) &= \mathbb{E}_{x_0 \sim p_0, x_1 \sim p_1, t\sim\mathcal{U}([0,1])}\Big[\|s_\theta(t, x_t) - 
    \nabla_{x_t} \log p(x_t \mid x_0)\|^2\Big].
    \label{eq:sm-objective}
\end{align}

\subsection{Three-sides of the same coin}
In the special case when the reference is given by a Gaussian $p_1 = \mathcal{N}(0, C)$\footnote{We consider a general covariance $C$ here to adapt to the functional flow matching (FFM) architecture \cite{kerrigan2023functional} used in our Navier-Stokes experiments. FFM uses a Mat\'ern GP covariance instead of white noise.}, we can show that the drift $b(t, x_t)$, the score $s(t, x_t)$ and the clean sample estimate $\mathbb{E}[x_0 \mid x_t]$ are all related linearly. First, we can easily check the relation between $s(t, x_t)$ and $\mathbb{E}[x_0 \mid x_t]$, noting that $p(x_t \mid x_0) = \mathcal{N}(x_t \mid \alpha_t x_0, \beta_t^2 C)$:
\begin{align}
    s(t, x_t) &:= \nabla_{x_t} \log p(x_t) \\
    &= \int \nabla_{x_t} \log p(x_t \mid x_0) p(x_0 \mid x_t) \mathrm{d} x_0 \\
    &= -\beta_t^{-2} C^{-1} \big(x_t - \alpha_t \mathbb{E}[x_0 \mid x_t]\big). \label{eq:score-explicit}
\end{align}

Re-arranging this expression gives us Tweedie's formula:
\begin{align}\label{eq:tweedie-formula}
    \mathbb{E}[x_0 \mid x_t] = \frac{x_t + \beta_t^2 C s(t, x_t)}{\alpha_t}
\end{align}

On the other hand, using the fact that $x_t = \alpha_t x_0 + \beta_t x_1$ implies $x_t = \alpha_t \mathbb{E}[x_0 \mid x_t] + \beta_t \mathbb{E}[x_1 \mid x_t]$ and $b(t, x_t) := \mathbb{E}[\dot{x}_t \mid x_t] = \dot{\alpha}_t \mathbb{E}[x_0 \mid x_t] + \dot{\beta}_t \mathbb{E}[x_1 \mid x_t]$, a simple algebra yields the relation between the drift $b(t, x_t)$ and the Tweedie estimate $\mathbb{E}[x_0 \mid x_t]$:
\begin{align}\label{eq:xo|xt-formula}
    \mathbb{E}[x_0 \mid x_t] = \frac{\beta_t b(t, x_t) - \dot{\beta}_t x_t}{\dot{\alpha}_t \beta_t - \alpha_t \dot{\beta}_t}.
\end{align}
Finally, combining \eqref{eq:tweedie-formula} and \eqref{eq:xo|xt-formula} gives us the relation between the drift $b(t, x)$ and the score $s(t, x_t)$:
\begin{align}\label{eq:drift-score-relation}
    Cs(t, x_t) &= \frac{\alpha_t b(t, x_t) - \dot{\alpha}_t x_t}{\beta_t(\dot{\alpha}_t\beta_t - \alpha_t\dot{\beta}_t)}.
\end{align}

\subsection{Choice of SDE to sample from $p(x_0 \mid x_t)$}\label{eq:sde-choice}
Given particular choices of $C_t$, we can show that the SDE \eqref{eq:backward-sde} not only samples the interpolant marginals $p(x_t)$ {\em but also the conditional $p(x_0 \mid x_t)$}, which is what we need in the denoising step of diffusion waltz. This follows from the following result.

\begin{proposition}
\label{prop:choice-of-diffusion}
    Let $p_1 = \mathcal{N}(0, C)$ for some positive definite $C$. Then, setting $C_t := \sigma_t^2 C$ in \eqref{eq:backward-sde} with $\{\sigma_t\}_{t \in (0, 1)}$ satisfying
    \begin{align}\label{eq:consistent-noise-schedule}
        \frac{\sigma_t^2}{2} =\frac{\beta_t (\alpha_t \dot{\beta}_t - \dot{\alpha}_t \beta_t)}{\alpha_t}, \quad t \in (0, 1),
    \end{align}
    the terminal law of the reverse-time SDE (\eqref{eq:backward-sde}) integrated backwards from time $t$ to $0$ from initial condition $x_t$, is equivalent to the conditional law $p(x_0 \mid x_t)$ of the interpolant.
\end{proposition}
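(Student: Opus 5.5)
The plan is to recognise the reverse-time SDE in \eqref{eq:backward-sde}, with $C_t=\sigma_t^2 C$ and $\sigma_t$ as in \eqref{eq:consistent-noise-schedule}, as the exact time reversal of a \emph{linear, Markovian} forward noising diffusion. I will show that this forward diffusion reproduces not only the marginals $p(x_t)$ of the interpolant but its full joint law of $(x_0,x_t)$. If so, reversing it from time $t$ started at $x_t$ must land on the conditional $p(x_0\mid x_t)$.

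\textbf{Step 1 (forward drift is independent of $x_0$).} Fix $x_0$. The conditional interpolant $x_s=\alpha_s x_0+\beta_s x_1$ has conditional velocity $b(s,x\mid x_0)=\dot\alpha_s x_0+\dot\beta_s(x-\alpha_s x_0)/\beta_s$ and conditional score $s(s,x\mid x_0)=-\beta_s^{-2}C^{-1}(x-\alpha_s x_0)$. Condition \eqref{eq:consistent-noise-schedule} gives $\sigma_s^2/(2\beta_s^2)=\dot\beta_s/\beta_s-\dot\alpha_s/\alpha_s$. Hence the forward-time drift is
\[
b(s,x\mid x_0)+\tfrac12\sigma_s^2 C\, s(s,x\mid x_0)=\dot\alpha_s x_0+\tfrac{\dot\alpha_s}{\alpha_s}(x-\alpha_s x_0)=\tfrac{\dot\alpha_s}{\alpha_s}\,x,
\]
which does not depend on $x_0$.

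\textbf{Step 2 (the forward SDE has the interpolant's transitions).} Consider the linear SDE $\diff x_s=(\dot\alpha_s/\alpha_s)x_s\,\diff s+\sigma_s\sqrt{C}\,\diff W_s$ started at $x_0$. Its solution is Gaussian with mean $\alpha_s x_0$. Its covariance $\Sigma_s$ solves $\dot\Sigma_s=2(\dot\alpha_s/\alpha_s)\Sigma_s+\sigma_s^2 C$ with $\Sigma_0=0$, and a direct check using \eqref{eq:consistent-noise-schedule} gives $\Sigma_s=\beta_s^2 C$. Thus, for every $s$, the law of $x_s$ given $x_0$ is $p(x_s\mid x_0)=\mathcal N(\alpha_s x_0,\beta_s^2C)$. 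Starting from $x_0\sim p_0$, the pair $(x_0,x_t)$ therefore has exactly the interpolant's joint law.

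\textbf{Step 3 (identify the time reversal).} Average the identity of Step 1 over $p(x_0\mid x_s=x)$, using $b(s,x)=\mathbb E[b(s,x\mid x_0)\mid x_s=x]$ and $s(s,x)=\mathbb E[s(s,x\mid x_0)\mid x_s=x]$ (as in \eqref{eq:score-explicit}). This shows that the forward drift also equals $b(s,x)+\tfrac12\sigma_s^2Cs(s,x)$. Anderson's time-reversal theorem then says that the reversal of this forward diffusion on $[0,t]$ has diffusion $\sigma_s\sqrt C$ and drift $\bigl(b+\tfrac12\sigma_s^2Cs\bigr)-\sigma_s^2Cs=b-\tfrac12\sigma_s^2Cs$, i.e.\ it is precisely \eqref{eq:backward-sde}. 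Because the reversed process started at $x_t$ has the conditional path law of the forward process given $x_t$, its terminal law at time $0$ is the conditional of $x_0$ given $x_t$ under the joint law of Step 2, which is $p(x_0\mid x_t)$.

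The main obstacle is justifying Step 3 rigorously near $s=0$. There $\beta_s\to0$ and $\sigma_s\to0$, so the score of $p(x_s)$ can be singular if $p_0$ has no density. I would handle this by applying Anderson's theorem on $[\epsilon,t]$, where all coefficients are smooth and the marginals are Gaussian-smoothed, and then letting $\epsilon\to0$ using the continuity of $x_s\to x_0$ under the forward SDE. A secondary point is that \eqref{eq:consistent-noise-schedule} must give $\sigma_t^2\ge0$. This holds by monotonicity, since $\dot\beta_t\ge0\ge\dot\alpha_t$.
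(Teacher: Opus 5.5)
Your proof is correct and is essentially the intended argument. The paper gives no proof of its own beyond calling the result a slight adaptation of Proposition C.1 of Potaptchik et al., and your route is the standard proof of that statement: you recognise \eqref{eq:backward-sde} with $C_t=\sigma_t^2C$ as the Anderson time reversal of the linear forward diffusion $\diff x_s=(\dot\alpha_s/\alpha_s)x_s\,\diff s+\sigma_s\,\diff W^C_s$, whose transition kernel is exactly $\mathcal{N}(\alpha_s x_0,\beta_s^2C)$, and your drift identity $b+\tfrac12\sigma_s^2Cs=\tfrac{\dot\alpha_s}{\alpha_s}x$ is the same one the paper uses right after the proposition to obtain \eqref{eq:consistent-backward-sde-s}.

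The one step you leave implicit, and should state, is uniqueness in law of the reverse SDE started from a fixed $x_t$ (available on $[\epsilon,t]$, where the score is smooth). That is what converts Anderson's theorem, which concerns the reversal started from $x_t\sim p(x_t)$, into the pointwise conditional statement for $p(x_t)$-almost every $x_t$, and your $\epsilon\to0$ argument already accommodates it.
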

\begin{proof}
    The proof is a slight adaptation of \cite[Proposition C.1]{potaptchik2026meta}.
\end{proof}

Using this result, we can further simplify the expression for our denoising SDE \eqref{eq:backward-sde}. First, by the relation \eqref{eq:drift-score-relation}, the drift of \eqref{eq:backward-sde} reads
\begin{align}
    b(t, x_t) - \frac{\sigma_t^2}{2} C s(t, x_t) &= b(t, x_t) - \frac{\sigma_t^2}{2} \frac{\alpha_t b(t, x_t) - \dot{\alpha}_t x_t}{\beta_t(\dot{\alpha}_t\beta_t - \alpha_t\dot{\beta}_t)} \\
    &\stackrel{\eqref{eq:consistent-noise-schedule}}{=}b(t, x_t) + \left(b(t, x_t) - \frac{\dot{\alpha}_t}{\alpha_t}x_t\right) \\
    &= 2b(t, x_t) - \frac{\dot{\alpha}_t}{\alpha_t}x_t
\end{align}
or equivalently,
\begin{align}
    b(t, x_t) - \frac{\sigma_t^2}{2} C s(t, x_t) = \frac{\dot{\alpha}_t}{\alpha_t}x_t - \sigma_t^2 Cs(t, x_t).
\end{align}

Thus, the denoising SDE \eqref{eq:backward-sde} simplifies to
\begin{align}\label{eq:consistent-backward-sde-b}
    \mathrm{d} x_t = \left(2b(t, x_t) - \frac{\dot{\alpha}_t}{\alpha_t}x_t\right) \mathrm{d} t + \sigma_t \mathrm{d}W_t^C,
\end{align}
or equivalently, using the score instead, we have
\begin{align}\label{eq:consistent-backward-sde-s}
    \mathrm{d} x_t = \left(\frac{\dot{\alpha}_t}{\alpha_t}x_t - \sigma_t^2 Cs(t, x_t)\right) \mathrm{d} t + \sigma_t \mathrm{d}W_t^C,
\end{align}
where $W_t^C := \sqrt{C} W_t$ is the $C$-Brownian motion.

\section{Acceptance probability computations}
\label{app:acceptance-probabilities}

Given a proposal kernel $q(x_0' \mid x_0)$, the Metropolis-Hastings accept/reject probability to sample from the posterior $p(x_0' \mid y)$ is given by
\begin{align}\label{eq:mh-prob}
    \alpha = \min\left\{1, \dfrac{p(x_0' \mid y)q(x_0 \mid x_0')}{p(x_0 \mid y)q(x_0' \mid x_0)}\right\}.
\end{align}
In this appendix, we derive the Metropolis/Hastings acceptance probabilities for diffusion waltz as it appears in Algorithms \ref{alg:random-walk-waltz} and \ref{alg:conditional-noising-waltz}.

\subsection{RWMH diffusion waltz}\label{app:rwmh-acceptance}

We first show that for RWMH diffusion waltz, as described in Algorithm \ref{alg:random-walk-waltz}, this acceptance probability reduces to a simple likelihood ratio, avoiding the need to evaluate prior densities. To show this, we use the following result.

\begin{lemma}
\label{lemma:reversibility}
  The RWMH diffusion waltz proposal kernel $q_\theta(x_0' \mid x_0)$ is reversible with respect to the diffusion prior $p_\theta(x_0)$, i.e.,
  \begin{align}\label{eq:reversibility}
      q_\theta(x_0' | x_0) p_\theta(x_0) = q_\theta(x_0 | x_0') p_\theta(x_0')
  \end{align}
\end{lemma}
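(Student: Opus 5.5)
The plan is to write the proposal kernel as a marginal over the intermediate noisy state and then use Bayes' rule to exhibit a symmetric joint density. By construction of Algorithm~\ref{alg:random-walk-waltz}, the kernel is
\begin{align*}
q_\theta(x_0' \mid x_0) = \int p_\theta(x_0' \mid x_t)\, p(x_t \mid x_0)\, \diff x_t .
\end{align*}
Here $p(x_t \mid x_0) = \mathcal{N}(x_t \mid \alpha_t x_0, \beta_t^2 C)$ is the interpolant noising kernel. The denoiser is defined as $p_\theta(x_0' \mid x_t) \propto p(x_t \mid x_0')\,p_\theta(x_0')$, which is exactly what Proposition~\ref{prop:choice-of-diffusion} yields for the learned SDE under the assumption of a perfect denoiser.

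First, I make the normalising constant explicit. It is the noisy marginal of the model,
\begin{align*}
p_\theta(x_t) := \int p(x_t \mid x_0)\, p_\theta(x_0)\, \diff x_0 ,
\end{align*}
so that $p_\theta(x_0' \mid x_t) = p(x_t \mid x_0')\,p_\theta(x_0')/p_\theta(x_t)$. Second, I multiply the kernel by $p_\theta(x_0)$ and substitute this expression, which gives
\begin{align*}
q_\theta(x_0' \mid x_0)\, p_\theta(x_0) = \int \frac{p(x_t \mid x_0)\,p_\theta(x_0)\; p(x_t \mid x_0')\,p_\theta(x_0')}{p_\theta(x_t)}\, \diff x_t .
\end{align*}
Third, I observe that the integrand is invariant under the swap $x_0 \leftrightarrow x_0'$. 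The right-hand side is therefore symmetric, and reading the same computation backwards identifies it with $q_\theta(x_0 \mid x_0')\,p_\theta(x_0')$. This establishes \eqref{eq:reversibility}. Equivalently, the pair $(x_0, x_0')$ consists of two conditionally independent draws given $x_t$ under the joint law $p(x_t)\,p_\theta(x_0 \mid x_t)\,p_\theta(x_0' \mid x_t)$, which is manifestly exchangeable.

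The main obstacle is justifying that the denoiser is \emph{exactly} the Bayesian posterior of the noising kernel under the \emph{same} prior $p_\theta$. This requires two things. The first is that $p_\theta$ is understood as the law generated by the learned SDE started from $p_1$. The second is that, under the schedule \eqref{eq:consistent-noise-schedule}, running that SDE from time $t$ reproduces the conditional law of $x_0$ given $x_t$ for this model and not for the data law $p_0$. I would handle this by applying Proposition~\ref{prop:choice-of-diffusion} with $p_0$ replaced by $p_\theta$, treating the learned drift and score as the exact ones for the model interpolant, which is the perfect-denoiser assumption.

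A minor technical point is that $p_\theta(x_t) > 0$ wherever the integrand is nonzero, which holds because the Gaussian noising kernel has full support. Fubini is justified because all integrands are nonnegative.
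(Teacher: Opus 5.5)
Your proof is correct and is essentially the paper's argument. The paper also writes $q_\theta$ as the integral of $p_\theta(x_0'\mid x_t)\,p(x_t\mid x_0)$ over $x_t$, substitutes Bayes' rule for the denoiser, and regroups the symmetric integrand $p(x_t\mid x_0)p_\theta(x_0)\,p(x_t\mid x_0')p_\theta(x_0')/p_\theta(x_t)$ as $p_\theta(x_0\mid x_t)\,p(x_t\mid x_0')\,p_\theta(x_0')$ to obtain $q_\theta(x_0\mid x_0')\,p_\theta(x_0')$; the only difference is that the paper simply defines the denoiser as this Bayesian posterior, where you additionally justify that via Proposition~\ref{prop:choice-of-diffusion}.
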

\begin{proof}
    The RWMH diffusion waltz proposal kernel (steps 1\&2 of Algorithm \ref{alg:random-walk-waltz}) can be expressed as
    \begin{align}
        q_\theta(x_0' \mid x_0) := \int p_\theta(x_0' \mid x_t) p(x_t \mid x_0) \mathrm{d}x_t.
    \end{align}
    Thus, we have
    \begin{align}
        q_\theta(x_0' | x_0) p_\theta(x_0) &= \int p_\theta(x_0' \mid x_t) p(x_t \mid x_0) p_\theta(x_0) \mathrm{d}x_t \\
        &=  \int \frac{p(x_t \mid x_0')p_\theta(x_0')}{p_\theta(x_t)} p(x_t \mid x_0) p_\theta(x_0) \mathrm{d}x_t \\
        &= \int p(x_t \mid x_0')p_\theta(x_0') \frac{p(x_t \mid x_0) p_\theta(x_0)}{p_\theta(x_t)} \mathrm{d}x_t \\
        &= \int p_\theta(x_0 \mid x_t) p(x_t \mid x_0')p_\theta (x_0') \mathrm{d}x_t \\
        &= q_\theta(x_0 \mid x_0') p_\theta(x_0'),
    \end{align}
    which establishes the reversibility of the diffusion waltz proposal kernel with respect to $p_\theta$.
\end{proof}

From this, we can simplify the ratio in \eqref{eq:mh-prob} as
\begin{align}\label{eq:rwmh-waltz-acceptance}
    \frac{p_\theta(x_0' \mid y)q_\theta(x_0 \mid x_0')}{p_\theta(x_0 \mid y)q_\theta(x_0' \mid x_0)} = \frac{p(y \mid x_0') p_\theta(x_0') q_\theta(x_0 \mid x_0')}{p(y \mid x_0) p_\theta(x_0)q_\theta(x_0' \mid x_0)} = \frac{p(y \mid x_0')}{p(y \mid x_0)},
\end{align}
proving our claim, where the first equality follows from Bayes' rule and the second follows from the reversibility property \eqref{eq:reversibility}.

\subsection{Conditional noising waltz}\label{app:cn-waltz-acceptance}
The conditional waltz kernel is given in full by:
\begin{align}
    &K_\theta(x_0 \rightarrow x_0') = P_{\text{accept}}(x_0 \rightarrow x_0') + P_{\text{reject}}(x_0 \rightarrow x_0')\delta(x_0 - x_0').
\end{align}
where
\begin{align}
    P_{\text{accept}}(x_0  \rightarrow x_0') &:= \int p_\theta(x_0' \mid x_t)q(x_t \mid x_0, y) \min\left\{1, \dfrac{p(y \mid x_0')\,q(x_t \mid x_0', y)\,p(x_t \mid x_0)}{p(y \mid x_0)\,q(x_t \mid x_0, y)\,p(x_t \mid x_0')}\right\} \diff x_t, \\
    P_{\text{reject}}(x_0 \rightarrow x_0') &:= 1 - P_{\text{accept}}(x_0 \rightarrow x_0').
\end{align}
We show that this kernel is reversible with respect to the posterior $p_\theta(x_0 \mid y)$, that is,
\begin{align}\label{eq:cn-kernel-reversible}
    K_\theta(x_0 \rightarrow x_0')p_\theta(x_0 \mid y) = K_\theta(x_0' \rightarrow x_0)p_\theta(x_0' \mid y).
\end{align}

To show this, we use the following lemma
\begin{lemma}
\label{lemma:pointwise-detailed-balance}
    The following identity holds for any $(x_0, x_0', x_t)$:
    \begin{align}\label{eq:pointwise-detailed-balance}
        \begin{split}
        &\left(p_\theta(x_0' \mid x_t)q(x_t \mid x_0, y) \min\left\{1, \dfrac{p(y \mid x_0')\,q(x_t \mid x_0', y)\,p(x_t \mid x_0)}{p(y \mid x_0)\,q(x_t \mid x_0, y)\,p(x_t \mid x_0')}\right\}\right) p_\theta(x_0 \mid y) \\
        &\quad = \left(p_\theta(x_0 \mid x_t)q(x_t \mid x_0', y) \min\left\{1, \dfrac{p(y \mid x_0)\,q(x_t \mid x_0, y)\,p(x_t \mid x_0')}{p(y \mid x_0')\,q(x_t \mid x_0', y)\,p(x_t \mid x_0)}\right\}\right) p_\theta(x_0' \mid y).
        \end{split}
    \end{align}
\end{lemma}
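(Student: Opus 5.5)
The plan is to reduce both sides of \eqref{eq:pointwise-detailed-balance} to one expression that is symmetric under swapping $x_0 \leftrightarrow x_0'$. We use the identity $\min\{1, r\}\, a = \min\{a, r a\}$ for $a \ge 0$, and we rewrite the denoising kernel via Bayes' rule exactly as in the proof of Lemma~\ref{lemma:reversibility}:
\begin{align*}
p_\theta(x_0' \mid x_t) = \frac{p(x_t \mid x_0')\, p_\theta(x_0')}{p_\theta(x_t)} .
\end{align*}
We also write $p_\theta(x_0 \mid y) = p(y \mid x_0)\, p_\theta(x_0)/Z$, where $Z = p_\theta(y)$ does not depend on $x_0$.

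First, substitute these expressions into the left-hand side. The prefactor becomes
\begin{align*}
a := \frac{p(x_t \mid x_0')\, p_\theta(x_0')\, q(x_t \mid x_0, y)\, p(y \mid x_0)\, p_\theta(x_0)}{Z\, p_\theta(x_t)} .
\end{align*}
Next, multiply $a$ by the ratio $r$ inside the min. The factors $p(y\mid x_0)$, $q(x_t\mid x_0,y)$ and $p(x_t\mid x_0')$ cancel, leaving
\begin{align*}
r a = \frac{p(x_t \mid x_0)\, p_\theta(x_0)\, q(x_t \mid x_0', y)\, p(y \mid x_0')\, p_\theta(x_0')}{Z\, p_\theta(x_t)} .
\end{align*}
The left-hand side therefore equals
\begin{align*}
\frac{p_\theta(x_0)\, p_\theta(x_0')}{Z\, p_\theta(x_t)} \min\bigl\{ p(y\mid x_0)\, q(x_t\mid x_0,y)\, p(x_t\mid x_0'),\; p(y\mid x_0')\, q(x_t\mid x_0',y)\, p(x_t\mid x_0) \bigr\}.
\end{align*}
This expression is invariant under exchanging $x_0$ and $x_0'$. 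The right-hand side of \eqref{eq:pointwise-detailed-balance} is literally the left-hand side with $x_0$ and $x_0'$ swapped, so the same computation yields the identical expression, and the identity follows. As a byproduct, the prior densities appear only as the common symmetric factor $p_\theta(x_0)\,p_\theta(x_0')$, which is why they never need to be evaluated.

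The only delicate point is degenerate cases where some density vanishes, for instance $q(x_t \mid x_0, y) = 0$ or $p(y\mid x_0) = 0$, which make the ratio ill-defined. I would adopt the usual MH convention that the acceptance term is taken as $0$ when its numerator vanishes and the prefactor already kills it otherwise. In the $\min\{a, ra\}$ form this convention holds automatically: both sides reduce to $0$ whenever either argument of the min is $0$. I would also assume $p_\theta(x_t) > 0$, which holds since the noising adds Gaussian noise with positive definite covariance. With these conventions, the algebra above is the whole proof.
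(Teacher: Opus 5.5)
Your proof is correct and follows essentially the same route as the paper: substitute Bayes' rule for $p_\theta(x_0'\mid x_t)$ and $p_\theta(x_0\mid y)$ so the prior densities cancel, then conclude from the $x_0 \leftrightarrow x_0'$ symmetry. The only difference is cosmetic. The paper splits into the cases $r<1$ and $r\ge 1$, whereas you absorb the prefactor into the minimum via $\min\{1,r\}\,a=\min\{a,ra\}$, which removes the case split and handles vanishing densities a bit more cleanly.
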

\begin{proof}
    For simplicity, let
    \begin{align}\label{eq:r1}
        r := \dfrac{p(y \mid x_0')\,q(x_t \mid x_0', y)\,p(x_t \mid x_0)}{p(y \mid x_0)\,q(x_t \mid x_0, y)\,p(x_t \mid x_0')}.
    \end{align}
    When $r < 1$, the left hand side and right hand side of \eqref{eq:pointwise-detailed-balance} reads, respectively,
    \begin{align}
        \text{LHS} &= p_\theta(x_0' \mid x_t)q(x_t \mid x_0, y) p_\theta(x_0 \mid y) \, r \\
        \text{RHS} &= p_\theta(x_0 \mid x_t)q(x_t \mid x_0', y) p_\theta(x_0' \mid y).
    \end{align}
    Thus, for equality \eqref{eq:pointwise-detailed-balance} to hold, we need
    \begin{align}\label{eq:r2}
        r = \frac{p_\theta(x_0 \mid x_t)q(x_t \mid x_0', y) p_\theta(x_0' \mid y)}{p_\theta(x_0' \mid x_t)q(x_t \mid x_0, y) p_\theta(x_0 \mid y)}.
    \end{align}
    Now, we have
    \begin{align}
    p_\theta(x_0 \mid x_t) = \frac{p(x_t \mid x_0) p_\theta(x_0)}{p_\theta(x_t)}, \qquad p_\theta(x_0' \mid y) = \frac{p(y \mid x_0) p_\theta(x_0)}{p_\theta(y)},
    \end{align}
    by Bayes' rule and substituting this into \eqref{eq:r2}, we get
    \begin{align}
        \eqref{eq:r2} &= \frac{p(x_t \mid x_0) p_\theta(x_0) \cancel{p_\theta(x_t)}}{p(x_t \mid x_0') p_\theta(x_0') \cancel{p_\theta(x_t)}} \times \frac{q(x_t \mid x_0', y)}{q(x_t \mid x_0, y)} \times 
        \frac{p(y \mid x_0') p_\theta(x_0') \cancel{p_\theta(y)}}{p(y \mid x_0) p_\theta(x_0) \cancel{p_\theta(y)}} \\
        &=  \frac{p(x_t \mid x_0) q(x_t \mid x_0', y) p(y \mid x_0') \cancel{p_\theta(x_0) p_\theta(x_0')}}{p(x_t \mid x_0') q(x_t \mid x_0, y) p(y \mid x_0) \cancel{p_\theta(x_0') p_\theta(x_0)}},
    \end{align}
    which agrees with the expression in \eqref{eq:r1}, and therefore the equality \eqref{eq:pointwise-detailed-balance} holds. By symmetry, \eqref{eq:pointwise-detailed-balance} also holds in the case $r \geq 1$.
\end{proof}

By Lemma \ref{lemma:pointwise-detailed-balance}, since identity \eqref{eq:pointwise-detailed-balance} holds pointwise in $x_t$, it also holds when we integrate this out, which gives us
\begin{align}
    P_{\text{accept}}(x_0 \rightarrow x_0') p_\theta(x_0 \mid y) = P_{\text{accept}}(x_0' \rightarrow x_0) p_\theta(x_0' \mid y).
\end{align}
Trivially, we also have
\begin{align}
    P_{\text{reject}}(x_0 \rightarrow x_0') \delta(x_0 - x_0') p_\theta(x_0 \mid y) = P_{\text{reject}}(x_0' \rightarrow x_0) \delta(x_0' - x_0) p_\theta(x_0' \mid y),
\end{align}
thus combining this, we have that the conditional waltz kernel $K_\theta$ is reversible with respect to $p_\theta(x_0 \mid y)$, i.e., \eqref{eq:cn-kernel-reversible} holds.

\begin{remark}
As a sanity check, the RWMH waltz acceptance probability \eqref{eq:rwmh-waltz-acceptance} follows from the conditional noising waltz acceptance probability \eqref{eq:r1}, by setting $q(x_t \mid x_0, y) = p(x_t \mid x_0)$.
\end{remark}

\section{Statistical linearisation and ensemble Kalman update rules}\label{app:ensemble-kalman}
Here, we provide details on the statistical linearisation procedure to derive the ensemble Kalman conditional noising (EKCN) kernel. The essence of statistical linearisation is to seek a linear approximation to a nonlinear function $\mathcal{A}(x) \approx Ax + b$ that is a best fit in terms of least squares \citep{zheng2024ensemble}, i.e., given an ensemble $\{x^{(j)}\}_{j=1}^J$
\begin{align}
    (A, b) = \argmin_{A, b}\left\{\frac{1}{J}\sum_{j=1}^J\big\|\mathcal{A}(x^{(j)}) - \big(Ax^{(j)} + b\big)\big\|^2\right\}
\end{align}
This gives us
\begin{align}
    A := (C^{xy})^\top (C^{xx})^{-1}, \qquad b := \overline{\mathcal{A}(x)} - A\bar{x},
\end{align}
where
\begin{align}
    &\bar{x} := \frac{1}{J}\sum_{j=1}^J x^{(j)}, \qquad \overline{\mathcal{A}(x)} := \frac{1}{J}\sum_{j=1}^J \mathcal{A}(x^{(j)})\\
    &C^{xx} := \frac{1}{J}\sum_{i,j=1}^J \big(x^{(i)} - \bar{x}\big)\big(x^{(j)} - \bar{x}\big)^\top \qquad C^{xy} := \frac{1}{J}\sum_{i,j=1}^J \big(x^{(i)} - \bar{x}\big)\big(\mathcal{A}(x^{(j)}) - \overline{\mathcal{A}(x)}\big)^\top. \label{eq:empirical-covariances}
\end{align}

A consequence of this is that we can approximate the Jacobian of $\mathcal{A}$ as $D\mathcal{A}(x) \approx (C^{xy})^\top (C^{xx})^{-1}$, which can be computed without explicitly taking derivatives. Thus, we can use this to solve optimisation problems in a derivative-free manner. For example, if we consider the objective
\begin{align}
    U(x) = \frac{1}{2} \|y - \mathcal{A}(x)\|_\Gamma^2,
\end{align}
where $\|x\|_A := x^\top A^{-1} x$,
its gradient can be approximated as
\begin{align}
    \nabla_x U(x) = D^\top \mathcal{A}(x) \Gamma^{-1} \big(\mathcal{A}(x) - y\big) \approx (C^{xx})^{-1} C^{xy} \Gamma^{-1} \big(\mathcal{A}(x) - y\big),
\end{align}
and we can consider gradient descent with this approximation. We can also derive analogues of  more sophisticated solvers such as Levenberg-Marquardt, as we show next.

\subsection{Ensemble Kalman update rule and the Levenberg-Marquardt solver}\label{app:ensemble-kalman}

We closely follow the presentation in \cite[Chapter 13]{sanz2023inverse} for the derivations of the ensemble Kalman variant of the Levenberg-Marquardt (LM) solver. The derivation largely follow from the following simple lemma:

\begin{lemma}[\cite{sanz2023inverse}, Lemma 13.3]\label{lemma:kalman-update}
The following identity holds
\begin{equation}\label{eq:least-squares}
\frac{1}{2}\|y - Ax\|_{\Gamma}^2 + \frac{1}{2}\|x - m\|_{C}^2 = \frac{1}{2}\|x - \mu\|_{\Sigma}^2 + \beta,
\end{equation}
where $\beta$ is a constant independent of $x$, and
\begin{align}
\mu &:= m + K(y - Am), \label{eq:kalman-mu}\\
\Sigma &:= (I - KA)C, \label{eq:kalman-sigma}
\end{align}
where $K$ is the {\bf Kalman gain matrix}, given by
\begin{equation}
K = C A^\top (ACA^\top + \Gamma)^{-1}.
\end{equation}
\end{lemma}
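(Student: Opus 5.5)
Note the paper's convention $\|x\|_A^2 = x^\top A^{-1}x$, so the factors of $\tfrac12$ are harmless. I would prove the identity by completing the square in $x$. Both sides are quadratic polynomials in $x$, and $\beta$ is allowed to absorb any $x$-independent terms. So it suffices to match the Hessian and the linear coefficient (equivalently, the minimiser) of the left-hand side with those of $\tfrac12\|x-\mu\|_\Sigma^2$.

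\textbf{Quadratic part.} Expanding the left-hand side, the quadratic term is $\tfrac12 x^\top (A^\top\Gamma^{-1}A + C^{-1})x$. The linear term is $-x^\top(A^\top\Gamma^{-1}y + C^{-1}m)$. Hence the left-hand side equals $\tfrac12\|x-\mu\|_\Sigma^2 + \beta$ with
\[
\Sigma^{-1} = A^\top\Gamma^{-1}A + C^{-1}, \qquad \Sigma^{-1}\mu = A^\top\Gamma^{-1}y + C^{-1}m .
\]
The task then reduces to two matrix identities: $(A^\top\Gamma^{-1}A + C^{-1})^{-1} = (I-KA)C$, and the resulting $\mu$ equals $m + K(y-Am)$.

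\textbf{Covariance.} For the first identity I would verify directly that $(A^\top\Gamma^{-1}A + C^{-1})(I-KA)C = I$, using $K = CA^\top S^{-1}$ with $S := ACA^\top+\Gamma$. Expanding gives
\[
A^\top\Gamma^{-1}AC + I - A^\top\Gamma^{-1}ACA^\top S^{-1}AC - A^\top S^{-1}AC .
\]
Writing $ACA^\top = S - \Gamma$ in the third term turns it into $-A^\top\Gamma^{-1}AC + A^\top S^{-1}AC$. Everything then cancels, leaving $I$. This is the Woodbury step, and it is the only place real algebra is needed; I expect it to be the main, if modest, obstacle.

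\textbf{Mean.} For the second identity, an efficient route is to use the identity $K = \Sigma A^\top\Gamma^{-1}$. This follows because $\Sigma A^\top\Gamma^{-1} S = \Sigma(A^\top\Gamma^{-1}ACA^\top + A^\top) = \Sigma\,\Sigma^{-1} C A^\top = CA^\top$. Then
\[
\mu = \Sigma(A^\top\Gamma^{-1}y + C^{-1}m) = Ky + \Sigma C^{-1}m = Ky + (I-KA)m ,
\]
which is exactly the claimed formula for $\mu$. Finally, $\beta$ is whatever constant remains after completing the square; no explicit formula is required, since the statement only asserts that it is independent of $x$.
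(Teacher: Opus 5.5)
Your proof is correct. The completion of the square giving $\Sigma^{-1}=A^\top\Gamma^{-1}A+C^{-1}$ and $\Sigma^{-1}\mu=A^\top\Gamma^{-1}y+C^{-1}m$ checks out, as do the direct verification $(A^\top\Gamma^{-1}A+C^{-1})(I-KA)C=I$ and the identity $K=\Sigma A^\top\Gamma^{-1}$ used for the mean. These steps need $C,\Gamma$ symmetric positive definite, which the weighted norms already presuppose. The paper gives no proof of its own and simply defers to the cited reference; your argument is the standard completing-the-square plus Woodbury derivation of this identity, so there is nothing missing.
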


Equations \eqref{eq:kalman-mu}--\eqref{eq:kalman-sigma} define the {\bf Kalman update rule}, which give us closed form expressions for the solution to the regularised least squares problem, given by the LHS of \eqref{eq:least-squares}, and the associated curvature at that minima. 

The Levenberg-Marquardt algorithm seeks to minimise the data misfit loss
\begin{align}\label{eq:data-misfit}
    \mathcal{J}(x) = \frac{1}{2}\|y - \mathcal{A}(x)\|_{\Gamma}^2,
\end{align}
and for a learning rate $h>0$, proceeds by iterating
\begin{align}\label{eq:lm-iteration}
    x_{n+1} = x_n + g_n(h),
\end{align}
where $g_n(h)$ is a minima of the least squares problem
\begin{align}\label{eq:lm-least-squares}
    g_n(h) = \argmin_v\left\{ \frac{1}{2}\big\|y - \mathcal{A}(x_n) - D\mathcal{A}(x_n)  v\big\|_{\Gamma}^2 + \frac{1}{2h}\big\|v\big\|_{C}^2\right\},
\end{align}
for some PSD matrix $C$. This may be interpreted as minimising $\frac12 \big|y - \mathcal{A}(x_n) - D\mathcal{A}(x_n)  v\big|_{\Gamma}^2$, while restricting the search space of $v$ to a ball $\{v : \|v\|_{C} \leq \delta\}$ for some $\delta$ associated to $h$, i.e., minimise within a {\em trust region}.

Invoking Lemma \ref{lemma:kalman-update}, we get
\begin{align}
    g_n(h) = K(h) (y - \mathcal{A}(x_n)),
\end{align}
where the Kalman gain is given by
\begin{align}\label{eq:lm-kalman-gain}
    K(h) := hC D^\top \mathcal{A}(x_n)\left(hD \mathcal{A}(x_n) \,C\, D^\top \mathcal{A}(x_n) + \Gamma \right)^{-1}.
\end{align}

Now recall that by statistical linearisation, given a cloud of particles $\{x^{(j)}\}_{j=1}^J$, we have that
\begin{align}
    \mathcal{A}(x) &\approx (C^{xy})^\top (C^{xx})^{-1}(x - \bar{x}) + \overline{\mathcal{A}(x)}, \label{eq:A-stat-lin}\\
    D\mathcal{A}(x) &\approx (C^{xy})^\top (C^{xx})^{-1}. \label{eq:DA-stat-lin}
\end{align}
Since $C$ in \eqref{eq:lm-least-squares} is arbitrary, we set $C = C^{xx}$. This implies
\begin{align}
    C^{xy} \stackrel{\eqref{eq:empirical-covariances}}{=} CD^\top\mathcal{A}(x),
\end{align}
and
\begin{align}
    C^{yy} &:= \frac{1}{J}\sum_{i,j=1}^J \big(\mathcal{A}(x^{(i)}) - \overline{\mathcal{A}(x)}\big)\big(\mathcal{A}(x^{(j)}) - \overline{\mathcal{A}(x)}\big)^\top \\
    &\stackrel{\eqref{eq:A-stat-lin}}{=} \frac{1}{J}\sum_{i,j=1}^J (C^{xy})^\top (C^{xx})^{-1} (x^{(i)} - \bar{x})(x^{(j)} - \bar{x})^\top (C^{xx})^{-1} C^{xy} \\
    &\stackrel{\eqref{eq:empirical-covariances}}{=} (C^{xy})^\top (C^{xx})^{-1} C_{xx} (C^{xx})^{-1} C^{xy} \\
    &\stackrel{\eqref{eq:DA-stat-lin}}{\approx} D\mathcal{A}(x)CD^\top\mathcal{A}(x).
\end{align}
Thus, we can approximate the Kalman gain \eqref{eq:lm-kalman-gain} in a gradient-free manner using ensembles as
\begin{align}\label{eq:lm-kalman-gain-ensemble}
    K(h) \approx h C^{xy} \left(hC^{yy} + \Gamma\right)^{-1}.
\end{align}
In practice, one iterates \eqref{eq:lm-iteration} for an ensemble of particles $\{x_n^{(j)}\}_{j=1}^J$ in parallel and at each time step $n$, the Kalman gain \eqref{eq:lm-kalman-gain-ensemble} is computed using the ensemble $\{x_n^{(j)}\}_{j=1}^J$.

\subsection{Details on the EKCN kernel}\label{app:ekcn-kernel}
\begin{figure}[t]
    \centering
    \includegraphics[width=1.0\textwidth]{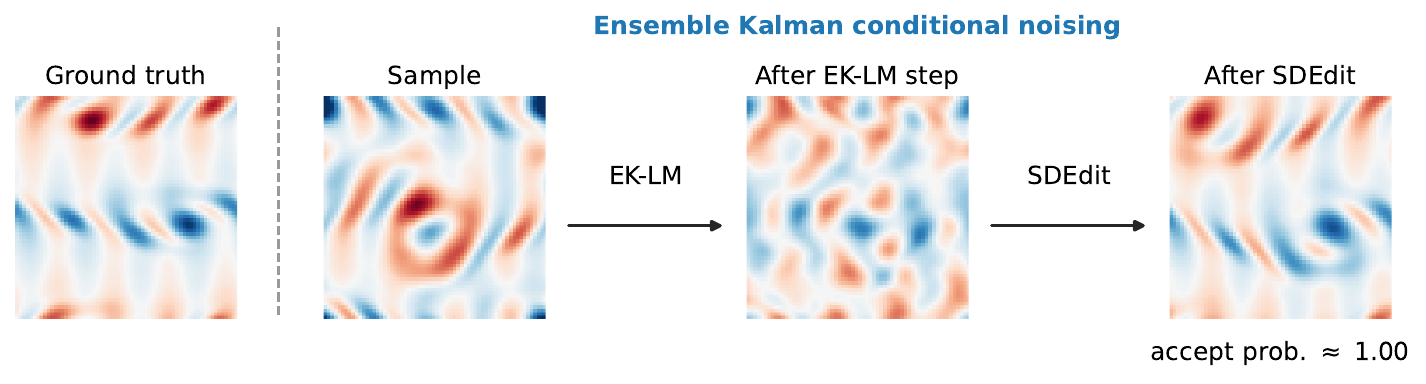}
    \caption{Illustration of the ensemble Kalman conditional noising diffusion waltz (DW-EKCN). This applies a single step of ensemble Kalman Levenberg-Marquardt (EK-LM) update to minimise data misfit, followed by applying SDEdit to project it back onto the data manifold. Earlier in the iterations, acceptance rate is high and convergence is fast as most proposed samples have lower data misfit.}
    \label{fig:ekcn-illustration}
\end{figure}

We use the ensemble Kalman variant of the LM update rule to define our ensemble Kalman conditional noising (EKCN) kernel $q(x_t \mid x_0, y)$. We split this into two steps:
\begin{enumerate}
\item For fixed $h>0$, apply a single ensemble Kalman LM update to $x_0$:
\begin{align}\label{eq:eklm-update}
    \tilde{x}_0 = x_0 + K(h) (y - \mathcal{A}(x_0)),
\end{align}
where $K(h)$ is computed using \eqref{eq:lm-kalman-gain-ensemble}.
\item Add noise via the interpolant
\begin{align}
    x_t = \alpha_t \tilde{x}_0 + \beta_t x_1, \quad x_1 \sim p_1.
\end{align}
\end{enumerate}
Combining this with the denoising step in diffusion waltz
\begin{enumerate}[start=3]
  \item Denoise $x_0' \sim p_\theta(x_0 \mid x_t)$,
\end{enumerate}
the full conditional noising waltz proposal kernel $q(x_0' \mid x_0)$ may be understood as first applying an ensemble Kalman update to minimise the data misfit \eqref{eq:data-misfit} (step 1); however, since without prior regularisation, this is likely to push the particle off-manifold, this is corrected by applying SDEdit \cite{mengsdedit} (steps 2-3), which keeps it on-manifold (see Figure \ref{fig:ekcn-illustration} for an illustration).

In practice, we run $J$ MCMC chains in parallel and the Kalman gain $K(h)$ at each step is computed using the ensemble. However, when the observation dimension is high, the matrix inverse in \eqref{eq:lm-kalman-gain-ensemble} is expensive to compute. Following a standard trick used in ensemble Kalman filters \cite{evensen2003ensemble}, we use the pushthrough identity identity $V (V^T V + \lambda I)^{-1} = (V V^T + \lambda I)^{-1} V$ to reduce this complexity.
Letting $\Gamma = \sigma_y^2 I$, $C^{xy} = U^\top V$ and $C^{yy} = V^\top V$, where
\begin{align}
    U_j = \frac{1}{\sqrt{J}}(x_0^{(j)} - \bar{x}_0), \qquad V_j = \frac{1}{\sqrt{J}}(\mathcal{A}(x_0^{(j)}) - \overline{\mathcal{A}(x_0)}),
\end{align}
we have
\begin{align}
    K(h) &\stackrel{\eqref{eq:lm-kalman-gain-ensemble}}{\approx} h U^\top V \left(hV^\top V + \sigma_y^2 I\right)^{-1} \\
    &=  U^\top V \left(V^\top V + (\sigma_y^2 / h) I\right)^{-1} \\
    &=  U^\top \left(V V^\top + (\sigma_y^2 / h) I\right)^{-1} V. \label{eq:lm-kalman-gain-ensemble-reduced}
\end{align}
Noting that the matrix inverse in \eqref{eq:lm-kalman-gain-ensemble-reduced} now has size $J \times J$, and since the ensemble size $J$ is a hyperparameter that one can choose freely, the computation of \eqref{eq:lm-kalman-gain-ensemble-reduced} can be made tractable.

We now make an important remark that in the computation of the acceptance probability 
\begin{align}
    \alpha = \min\left\{1, \dfrac{p(y \mid x_0')\,q(x_t \mid x_0', y)\,p(x_t \mid x_0)}{p(y \mid x_0)\,q(x_t \mid x_0, y)\,p(x_t \mid x_0')}\right\},
\end{align}
we {\em use the same Kalman gain $K(h)$ in the computation of $q(x_t \mid x_0', y)$ and $q(x_t \mid x_0, y)$}to ensure that the same kernel $q$ is used in both the numerator and denominator. More precisely, given an ensemble $\{x_0^{(j)}\}_{j=1}^J$, we first compute $K(h)$ using this ensemble and compute
\begin{align}
    q(x_t^{(j)} \mid x_0^{(j)}, y) &= \mathcal{N}\Big(x_t^{(j)} \mid \alpha_t \big(x_0^{(j)} + K(h) \big(y - \mathcal{A}(x_0^{(j)})\big)\big), \beta_t^2 C\Big) \label{eq:cond-noise-forward} \\
    q(x_t^{(j)} \mid x_0^{'(j)}, y) &= \mathcal{N}\Big(x_t^{(j)} \mid \alpha_t \big(x_0^{'(j)} + K(h) \big(y - \mathcal{A}(x_0^{'(j)})\big)\big), \beta_t^2 C\Big), \label{eq:cond-noise-reverse}
\end{align}
for {\em the same $K(h)$}, i.e., we do not recompute it in \eqref{eq:cond-noise-reverse} using the updated samples $\{x_0^{'(j)}\}_{j=1}^J$.

\begin{remark}
    The interacting particle-style update in equations (\ref{eq:cond-noise-forward})–(\ref{eq:cond-noise-reverse}) subtly breaks exactness of the method due to the dependence of the proposal on the law of the particles. Methods exist to overcome this, e.g., splitting particles into groups and updating one group using information from the other \cite{goodman2010ensemble}; however, in practice, we found that simply using all the ensemble members yield faster convergence and better empirical performance.
\end{remark}

\section{Adaptive MCMC tuning}
\label{app:adaptive-tuning}

We consider Robbins-Monro method to adaptively tune the hyperparameters $t \in (0, 1)$ (the noising time used in waltz) and $h \in (0, 1)$ (the ensemble Kalman step size) \citep{andrieu2008tutorial}. This reads
\begin{align}
    \text{logit}(t_{n+1}) &= \mathtt{clip}\big(\text{logit}(t_n) + \eta_t(\alpha_n - \alpha^*)\big), \\
    \text{logit}(h_{n+1}) &= \mathtt{clip}\big(\text{logit}(h_n) + \eta_h(\alpha_n - \alpha^*)\big), 
\end{align}
where $\alpha^*$ is a target acceptance rate, $\alpha_n$ is the instantaneous acceptance rate (i.e., how many samples out of $J$ were accepted at step $n$), and $\eta_t, \eta_h > 0$ are step-sizes, all specified by the users. Clipping is applied e.g. to prevent $t$ from going down to 0, which is important as we discuss later. In our experiments, we use the adaptive tuning hyperparameters as shown in Table \ref{tab:adaptive-hyperparams}.

\begin{table}[htbp]
    \centering
    \caption{Choice of adaptive MCMC hyperparameters}
    \begin{tabular}{ccccccc c}
        \toprule
        $\alpha^*$ & $\eta_t$ & $\eta_h$ & $t_{\text{min}}$ & $t_{\text{max}}$ & $h_{\text{min}}$ & $h_{\text{max}}$ & solver\\
        \midrule
        0.3 & 0.3 & 0.3 & 0.3 & 0.7 & 0.01 & 1.0 & \texttt{LM}\\
        \bottomrule
    \end{tabular}
    \label{tab:adaptive-hyperparams}
\end{table}

In general, we find that adaptively tuning the hyperparameters starting from high $t, h$ improves the performance. Intuitively, this is because the ensemble Kalman LM update \eqref{eq:eklm-update} makes larger jumps early in the iteration, requiring a larger noising time $t$ to project the proposal $\tilde{x}_0$ back onto the data manifold. However, as the chains converge, the ensemble Kalman updates become smaller with proposals $\tilde{x}_0$ being more on-manifold, requiring smaller $t$. Robbins-Monro automatically drives $t$ down to smaller values as MCMC progresses, as acceptance rate decreases when the chains have converged to high likelihood regions and smaller ``edits" become necessary for higher acceptance.
However, decreasing $t$ creates another problem, where the acceptance rate tends to become vanishingly small. To see this, we split the log acceptance probability into a likelihood log-ratio and the remaining term that emerges due to the conditional noising
\begin{align}
    \log \alpha &= \log \left(\dfrac{p(y \mid x_0')\,q(x_t \mid x_0', y)\,p(x_t \mid x_0)}{p(y \mid x_0)\,q(x_t \mid x_0, y)\,p(x_t \mid x_0')}\right) \\ 
    &=  \log \left(\dfrac{p(y \mid x_0')}{p(y \mid x_0)}\right) +  \log \left(\dfrac{q(x_t \mid x_0', y)\,p(x_t \mid x_0)}{q(x_t \mid x_0, y)\,p(x_t \mid x_0')}\right).
\end{align}
The remaining term can be expanded as
\begin{align}
    &\log \left(\dfrac{q(x_t \mid x_0', y)\,p(x_t \mid x_0)}{q(x_t \mid x_0, y)\,p(x_t \mid x_0')}\right) \\
    &= -\frac{1}{2\beta_t^2} \big\|x_t - \alpha_t(x_0' + K(h)(y - A(x_0')))\big\|_C^2 - \frac{1}{2\beta_t^2} \big\|x_t - \alpha_tx_0\big\|_C^2 \nonumber \\
    &\quad +\frac{1}{2\beta_t^2} \big\|x_t - \alpha_t(x_0 + K(h)(y - A(x_0)))\big\|_C^2 + \frac{1}{2\beta_t^2} \big\|x_t - \alpha_tx_0'\big\|_C^2 \\
    &= - \frac{\alpha_t^2}{2\beta_t^2} \big\|K(h)(y - A(x_0'))\big\|_C^2 + \frac{\alpha_t^2}{2\beta_t^2} \big\|K(h)(y - A(x_0))\big\|_C^2 \nonumber \\
    &\quad + \frac{\alpha_t}{\beta_t^2} \big<x_t - \alpha_t x_0', K(h)(y - A(x_0'))\big>_C - \frac{\alpha_t}{\beta_t^2} \big<x_t - \alpha_t x_0, K(h)(y - A(x_0))\big>_C. \label{eq:log-ratio-expression}
\end{align}
Using the fact that $x_t = \alpha_t \big(x_0 + K(h)(y - \mathcal{A}(x_0))\big) + \beta_t x_1$, for some $x_1 \sim \mathcal{N}(0, C)$, we get (key terms that change are highlighted)
\begin{align}
    \eqref{eq:log-ratio-expression} &= - \frac{\alpha_t^2}{2\beta_t^2} \big\|K(h)(y - A(x_0'))\big\|_C^2 + \frac{\alpha_t^2}{2\beta_t^2} \big\|K(h)(y - A(x_0))\big\|_C^2 \nonumber\\
    &\quad + \frac{\alpha_t}{\beta_t^2} \big<x_t - \alpha_t x_0', K(h)(y - A(x_0'))\big>_C \nonumber \\
    &\quad - \frac{\alpha_t}{\beta_t^2} \big<{\color{magenta} \alpha_t K(h)(y - A(x_0)) + \beta_t x_1}, K(h)(y - A(x_0))\big>_C \\
    &= - \frac{\alpha_t^2}{2\beta_t^2} \big\|K(h)(y - A(x_0'))\big\|_C^2  {\color{magenta} - \frac{\alpha_t^2}{2\beta_t^2} \big\|K(h)(y - A(x_0))\big\|_C^2} \nonumber \\
    &\quad - \frac{\alpha_t}{\beta_t^2} \big<x_t - \alpha_t x_0', K(h)(y - A(x_0'))\big>_C + \frac{\alpha_t}{\beta_t} \big<x_1, K(h)(y - A(x_0))\big>_C.
\end{align}
Note that as $t \rightarrow 0$, we have that $\alpha_t / \beta_t \rightarrow \infty$ by our assumptions on $\alpha_t, \beta_t$, which drives the first two terms to $-\infty$ (the last two terms are indefinite; in fact, the last term is zero in expectation, conditioned on $x_0$). For small $t$, this term therefore dominates and drives the acceptance probability close to zero, regardless of how diffuse the likelihood is.

To prevent this collapse, we (i) set a lower bound on $t$ so that it doesn't reach to $0$, and (ii) tune the LM learning rate $h$ jointly with $t$; since $K(h) \rightarrow 0$ as $h \rightarrow 0$, it helps to counterbalance the growth of $\alpha_t / \beta_t$ as $t \rightarrow 0$.
We note that when $h = 0$, we simply get back RWMH waltz. Thus, as EKCN waltz progresses and the chains converge, it starts to behave more like RWMH waltz.

\section{Experimental details}
\label{app:experiment-details}

\subsection{Problem statement}
We consider the Navier-Stokes (NS) equation on a 2D torus (vorticity form)
\begin{equation}
    \begin{aligned}
        \partial_t \omega(\boldsymbol{x}, t) + \boldsymbol{u}(\boldsymbol{x}, t) \cdot \nabla \omega(\boldsymbol{x}, t) &= \nu \Delta \omega(\boldsymbol{x}, t) + f(\boldsymbol{x}), & \boldsymbol{x} &\in (0, 2\pi)^2, t \in (0, T] \\
        \nabla \cdot \boldsymbol{u}(\boldsymbol{x}, t) &= 0, & \boldsymbol{x} &\in (0, 2\pi)^2, t \in [0, T] \\
        \omega(\boldsymbol{x}, 0) &= \omega_0(\boldsymbol{x}), & \boldsymbol{x} &\in (0, 2\pi)^2.
    \end{aligned}
    \label{eq:navier-stokes}
\end{equation}
Here, $\boldsymbol{u}(\boldsymbol{x}, t) := (u(\boldsymbol{x}, t), v(\boldsymbol{x}, t))$ is the velocity field, $\omega(\boldsymbol{x}, t) := \frac{\partial v}{\partial x}(\boldsymbol{x}, t) - \frac{\partial u}{\partial y}(\boldsymbol{x}, t)$ is the vorticity field, $\nu$ is the Reynolds' number and $f(\boldsymbol{x})$ is an external forcing forcing.

The observations for our inverse problem are obtained as follows: First, we solve the NS system for $T$ time units using an in-house pseudo-spectral NS solver with finite-difference time-stepping to obtain $\mathcal{F}_T : \omega_0 \mapsto \omega_T$. Then, we take a spatially strided downsampling $\mathcal{P}_k$ ($\times k$ downsampling), with additive isotropic Gaussian noise scaled relative to the dataset standard deviation $\sigma_0$. Thus the observation model reads
\begin{align}
y = \mathcal{P}_k \circ \mathcal{F}_T(\omega_0) + \sigma_{y}\,\varepsilon, \qquad \varepsilon \sim \mathcal{N}(0, I).
\end{align}

In our experiments, we fix $k = 4$ and consider various choices of $T$ and $\sigma_y$ controlling the degree of nonlinearity and intensity of noise (see Table \ref{tab:regimes})

\begin{table}[htbp]
    \centering
    \caption{Experimental regimes}
    \begin{tabular}{ccc}
        \toprule
         & Low & High \\
        \midrule
        Nonlinearity & $T = 1$ & $T = 5$  \\
        Noise & $\sigma_y / \sigma_0 = 0.1$ & $\sigma_y / \sigma_0 = 1.0$ \\
        \bottomrule
    \end{tabular}
    \label{tab:regimes}
\end{table}

\subsection{Data}
Our data comes from Navier-Stokes vorticity trajectories ($200$ trajectories of $501$ timesteps at $64\times64$ resolution with Kolmogorov forcing $f(x, y)=-4\cos(4y)$ and Reynolds' number $\nu = 40$) as used in \citep{li2022learning}. To avoid highly correlated samples, we thin the trajectories by subsampling every $5$ timesteps, giving us $\sim 20,000$ snapshots, split by trajectory into $180$ train / $20$ validation trajectories.

\subsection{Model architecture}
For our generative prior, we use
functional flow matching \cite{kerrigan2023functional}, where we parameterise the flow matching marginal velocity $b_\theta(t, \omega_t)$ by a Fourier Neural Operator (FNO)~\citep{li2020fourier}. The input field $\omega_t$ is concatenated with an explicit 2D positional grid $\boldsymbol{x}$ and a constant channel broadcasting the flow-matching time $t$, then lifted to a hidden width of $64$ channels via a pointwise MLP. This is followed by $4$ Fourier layers, each retaining the lowest $32$ Fourier modes per spatial axis, with GELU activations. The final hidden representation is projected back to the physical output channel via a second pointwise MLP.

We use a linear interpolant $\alpha_t = 1-t$, $\beta_t = t$ and a Mat\'ern Gaussian process (GP) reference measure $p_1$. In practice, we sample from the GP using the Karhunen-Lo\`eve expansion
\begin{align}\label{eq:karhunen-loeve}
    \omega_1(\boldsymbol{x}) = \sum_{k \in \mathbb{Z}^2} \sqrt{\lambda_k}\, \xi_k \, e_k(\boldsymbol{x}), \qquad \xi_k \sim \mathcal{N}(0,1) \ \text{i.i.d.},
\end{align}
where $e_k$ are the 2D Fourier modes on the two-torus (i.e., $\sin(2\pi k\boldsymbol{x}), \cos(2\pi k\boldsymbol{x})$) and $\lambda_k$ is the spectral density of the kernel:
\begin{align}
\lambda_k = \sigma^2 \, \frac{2\pi (2\nu)^{\nu+1}}{\ell^{2\nu}} \left(\frac{2\nu}{\ell^2} + |k|^2\right)^{-(\nu+1)}, \qquad k \in \mathbb{Z}^2.
\end{align}

We use the denoising SDE \eqref{eq:consistent-backward-sde-b}, which in the linear interpolant setting, reads
\begin{align}\label{eq:ns-denoising-sde}
    \mathrm{d} \omega_t(\boldsymbol{x}) = \left(\frac{\omega_t(\boldsymbol{x})}{1-t} + 2 b_\theta(t, \omega_t(\boldsymbol{x}))\right) \mathrm{d} t + \sqrt{\frac{2t}{1-t}} \mathrm{d}W_t^C,
\end{align}
solved backwards in time, where $C$ is the Mat\'ern covariance operator and $\{W_t^C\}_{t \in [0, 1]}$ is the $C$-Brownian motion. By Proposition \ref{prop:choice-of-diffusion}, this samples from $p_\theta(\omega_0 \mid \omega_t)$. In practice, we discretise the field $\omega_t$ on a grid of size $N \times N$, whose discretisation we denote by $\boldsymbol{\omega}_t$. We solve \eqref{eq:ns-denoising-sde} using Euler-Maruyama:
\begin{align}\label{eq:denoising-sde-discretised}
    \boldsymbol{\omega}_{n+1} = \boldsymbol{\omega}_{n} - \left(\frac{\boldsymbol{\omega}_{n}}{1-t_n} + 2 \boldsymbol{b}_\theta(t, \boldsymbol{\omega}_{n})\right) \Delta t + \sqrt{\frac{2t_n \Delta t}{1-t_n}} \boldsymbol{\xi}_n^C,
\end{align}
where $\boldsymbol{\xi}_n^C$ are i.i.d. draws from the GP using \eqref{eq:karhunen-loeve}. The setting we use in our experiments is shown in Table~\ref{tab:gp-hyperparams}.

\begin{table}[htbp]
    \centering
    \caption{Choice of Mat\'ern GP reference hyperparameters}
    \begin{tabular}{cccc}
        \toprule
        Grid size $N$ & Smoothness $\nu$ & Lengthscale $\ell$ & Variance $\sigma^2$ \\
        \midrule
        64 & 0.5 & 0.1 & 1.0 \\
        \bottomrule
    \end{tabular}
    \label{tab:gp-hyperparams}
\end{table}

\subsection{Metrics}\label{app:metricx}
To evaluate the performance of the posterior sampling methods, we draw $J$ samples $\{x_{0,j}\}_{j=1}^J$ from the approximate posterior $p(x_0 \mid y)$ produced by each method and compare them against the ground truth $x_0^\dagger$. We report the following metrics, each of which captures a different aspect of sample quality: accuracy of the point estimate, distributional fit, and calibration of the uncertainty.

\paragraph{Relative $\ell_2$ (Rel-$\ell_2$).}
The Rel-$\ell_2$ error measures how close the ensemble mean $\bar{x}_0$ is to the ground truth, normalized by the norm of the ground truth itself. It is a standard measure of reconstruction accuracy and is agnostic to the spread of the posterior samples.
\begin{align}
    \mathtt{RelL2} = \frac{\| \bar{x}_0 - x_0^\dagger\|_{L^2}}{\|x_0^\dagger\|_{L^2}}, \quad \text{where} \quad \bar{x}_0 = \frac{1}{J} \sum_{j=1}^J x_{0,j}
\end{align}
Lower values indicate that the posterior mean is closer to the true signal.

\paragraph{Continuous ranked probability score (CRPS).}
Unlike Rel-$\ell_2$, CRPS is a proper scoring rule that evaluates the full ensemble rather than just its mean. It rewards both accuracy (samples close to $x_0^\dagger$) and sharpness (low spread among samples), while penalizing ensembles that are either biased or overly diffuse.
\begin{align}
   \mathtt{CRPS} =  \frac{1}{J}\sum^{J}_{j=1} \| x_0^{(j)} - x_0^\dagger \|_{L^1} -
   \frac{1}{2J(J-1)}\sum^{J}_{i,j=1}\| x_0^{(i)} - x_0^{(j)} \|_{L^1} .
\end{align}
Lower CRPS values indicate a better-calibrated and more accurate ensemble.

\paragraph{Spread skill ratio (SSR).}
The SSR assesses whether the spread of the posterior ensemble is consistent with its actual error, i.e., whether the uncertainty estimates are well calibrated. The numerator captures the average ensemble variance (spread), while the denominator captures the squared error of the ensemble mean (skill).
\begin{align}
    \mathtt{SSR} =\sqrt{\left(\frac{J + 1}{J}\right) \frac{\frac{1}{J}\sum_{j=1}^J \| x_0^{(j)} - \bar{x}_0\|_{L^2}^2}{\|\frac{1}{J}\sum_{j=1}^J x_0^{(j)} - x_0^\dagger\|_{L^2}^2}},
\end{align}
An SSR close to $1$ indicates that the ensemble spread matches the actual estimation error (i.e., the posterior is well calibrated). Values of $\mathtt{SSR} < 1$ indicate an overconfident (underdispersed) posterior, while $\mathtt{SSR} > 1$ indicates an overdispersed posterior.

\subsection{Diffusion waltz details}
We tune both the RWMH and EKCN waltz using Robbins--Monro stochastic approximation (Appendix \ref{app:adaptive-tuning}), with the step-size parameters $t$ and $h$ initialized to $0.7$ and $1.0$, respectively. The ensemble is initialized from diffusion prior samples, and we use $J = 100$ ensemble members throughout. For the EKCN waltz, we use all $100$ members for the computation of the Kalman gain (Appendix \ref{app:ensemble-kalman}).

We run $100$ MCMC chains in parallel for $250$ steps with a burn-in of $249$ steps; that is, we compute all metrics using only the final ensemble members. The underlying denoising SDE~\eqref{eq:ns-denoising-sde} is discretised using the Euler--Maruyama scheme, and we do not apply MADM correction \citep{lam2026metropolis} for exact sampling.

\subsection{Baseline details}\label{app:baselines}

We briefly explain each baselines we use below. The hyperparameters are tuned to minimise the CRPS across five validation samples. As with diffusion waltz, all baselines use $J = 100$ ensembles. Metrics are computed using the final converged ensemble members.

\paragraph{Ensemble Kalman Inversion (EKI) \citep{iglesias2013ensemble, schillings2017analysis}.} This uses a derivative-free optimisation method using ensembles, as shown in \ref{app:ensemble-kalman} to minimise the data-fit loss $\frac12\|y - \mathcal{A}(x)\|^2_\Gamma$. EKI does not take into account the prior information of $x$, thus the optimised $x^*$ may lie outside of the data manifold. In addition, being essentially an optimisation method, the particles are prone to collapsing to the same states at convergence. We use a maximum of $50$ iterations with early stopping applied if the loss improvement is less than \texttt{1e-3}. The solver we use is Levenberg-Marquardt with $h=1$. We consider stochastic perturbations of the observations, as is standard in EnKF/EKI \citep{schillings2017analysis}.

\paragraph{Ensemble Kalman guidance (EnKG) \citep{zheng2024ensemble}.} This frames guidance as predictor-corrector sampling in diffusion models with information about $y$ injected during the corrector step using ensemble Kalman update rules (Appendix \ref{app:ensemble-kalman}). Since the Kalman updates are used only to guide the denoising trajectory of a diffusion model, in contrast to EKI, the obtained samples will be on-manifold. However, it shares the same issue of particles collapsing to the same point at convergence. We use $60$ denoising steps with guidance strength set to $\zeta=0.1$.

\paragraph{Feynman-Kac (FK) Steering \citep{singhal2025general}.}
This uses sequential Monte Carlo (SMC) to condition on observations. We use the gradient-free variant, which runs a bootstrap particle filter over the reverse diffusion process, where particles are resampled according to how much their Tweedie-estimate's fit to the observation has improved. This method is asymptotically exact, yielding the posterior $p(x_0 \mid y)$ exactly as the number of particles $J$ are increased. We use $50$ denoising steps, \texttt{DIFFERENCE} potential and tempering parameter $\zeta = 0.03$.

\paragraph{ESS-Flow \citep{kalaivanan2025ess}.}
This considers the flow matching ODE as a deterministic transport map from the Gaussian reference $p_1$ to the data distribution $p_0$ and applies elliptic slice sampling (ESS)  \cite{murray2010elliptical} using the Gaussian reference prior, with likelihood $p(y \mid T_\theta(x_1))$, where $T_\theta$ is the flow matching ODE flow from $t = 1 \rightarrow 0$. This is also an exact method, up to ODE discretisation error. By default, this uses delayed acceptance, where step sizes are shrunk until a particle is accepted; this leads to many forward model evaluations, making it expensive. We use $25$ denoising ODE steps and $200$ MCMC steps; no multifidelity ODE sampling is considered.

\paragraph{Blade \citep{zheng2025blade}.}
This is based on the split Gibbs approach PnP-DM \cite{wu2024principled}, which introduces auxiliary variables $z$, which are noised versions of the clean data $x_0$ and performs Gibbs sampling to iteratively sample $p(z \mid y) \propto p(y \mid z) p(z \mid x_0)$ (likelihood step) and $p(x_0 \mid z)$ (prior step). The prior step is solved using an exact denoiser \cite{bellchambers2025exploiting} and the likelihood step is performed using unadjusted Langevin algorithm (ULA). In Blade, the gradients in this Langevin step is approximated by statistical linearisation, similar to EKI or EnKG. We use $50$ annealing steps, $50$ Langevin steps and $25$ denoising SDE steps.

\paragraph{MFM-GF \citep{potaptchik2026meta, holderrieth2026diamond}.} Meta flow maps (MFM) / diamond maps learn a one-step distilled model to sample from $p(x_0 \mid x_t)$. Using this, one can rapidly sample from the exact likelihood $p(y \mid x_t) = \mathbb{E}_{x_0 \mid x_t}\left[p(y \mid x_0)\right]$ using Monte-Carlo and compute the likelihood score $\nabla_{x_t} \log \mathbb{E}_{x_0 \mid x_t}\left[p(y \mid x_0)\right]$ by taking the gradients through the Monte-Carlo estimation. In MFM-GF, this gradient can be computed without taking gradients through the likelihood $p(y \mid x_0)$ using importance sampling. We use $100$ denoising steps and for each ensemble member, use $10$ MC samples to approximate the likelihood.

\subsection{Comparison with gradient-based guidance methods}\label{app:gradient-based-approaches}
We also make comparisons to several common gradient-based guidance methods, namely, DPS \citep{chung2022diffusion}, MMPS \citep{rozet2024learning} and DAPS \citep{zhang2025improving}\footnote{Since we use a functional flow matching prior, we use functional modifications to all methods, e.g., our DPS is technically equivalent to FunDPS \citep{yao2026guided} --- we make similar modifications to MMPS and DAPS, as well as all gradient-free methods where necessary.}. 
Since our forward operator $\mathcal{A} = \mathcal{P}_k \circ \mathcal{F}_T$ is non-differentiable, we cannot use these methods directly. Thus, we train FNO surrogates for the Navier-Stokes solver $\mathcal{F}_T$ and use this as low-cost differentiable replacements for the finite difference solver. We display results in Tables \ref{tab:t1-results}--\ref{tab:t5-results} in gray.

We see that DPS and MMPS achieve competitive performance and perform the best in the low nonlinearity / low noise setting.
However, in the other settings (especially high nonlinearity) they are mostly outperformed by the MCMC based methods: DW-EKCN, ESS-flow and Blade (except on $T=5$). Note that both DPS and MMPS only yield biased approximations to the posterior, which become more significant as the forward operator becomes more nonlinear, explaining why it struggles in the high nonlinearity setting. 
DAPS does not perform well on this problem -- it is known to struggle when the forward model is given by a PDE operator \citep{zheng2025inversebench}.

\begin{table}[t]
\centering
\caption{Results in the low nonlinearity regime ($T=1$). Best results in {\bf{bold}}, second best \underline{underlined}. Gradient based methods are displayed in gray.}
\resizebox{\textwidth}{!}{%
\begin{tabular}{l ccc ccc}
\toprule
\textbf{Observation noise} & \multicolumn{3}{c}{Low noise} & \multicolumn{3}{c}{High noise} \\
\cmidrule(lr){2-4} \cmidrule(lr){5-7}
\textbf{Methods} & Rel $\ell_2\downarrow$ & CRPS$\downarrow$ & SSR$\rightarrow 1$ & Rel $\ell_2\downarrow$ & CRPS$\downarrow$ & SSR$\rightarrow 1$ \\
\midrule
\color{gray} DPS  & \color{gray} 0.075 (0.011) & \color{gray} 0.039 (0.008) & \color{gray} 1.393 (0.084) & \color{gray} 0.275 (0.078) & \color{gray} 0.127 (0.023) & \color{gray} 0.855 (0.162) \\
\color{gray} MMPS & \color{gray} \bf{0.064 (0.009)} & \color{gray} \bf{0.032 (0.006)} & \color{gray} \bf{0.983 (0.095)} & \color{gray} 0.254 (0.063) & \color{gray} \underline{0.116 (0.020)} & \color{gray} \underline{0.927 (0.189)} \\
\color{gray} DAPS & \color{gray} 0.401 (0.013) & \color{gray} 0.182 (0.037) & \color{gray} \underline{1.188 (0.105)} & \color{gray} 0.710 (0.062) & \color{gray} 0.338 (0.028) & \color{gray} 1.342 (0.068) \\
EKI  & 0.517 (0.077) & 0.370 (0.086) & 0.024 (0.006) & 0.832 (0.064) & 0.560 (0.042) & 0.126 (0.009) \\
EnKG               & 0.129 (0.014) & 0.086 (0.009) & 0.076 (0.019) & 0.617 (0.120) & 0.414 (0.049) & 0.071 (0.009) \\
FK Steering  & 0.786 (0.109) & 0.538 (0.150) & 0.012 (0.024) & 0.782 (0.109) & 0.500 (0.145) & 0.101 (0.024) \\
ESS-Flow  & 0.178 (0.054) & 0.080 (0.033) & 1.405 (0.198) & \underline{0.248 (0.030)} & 0.118 (0.030) & 1.179 (0.154) \\
Blade & \underline{0.069 (0.012)} & \underline{0.037 (0.007)} & 1.797 (0.139) & 0.296 (0.071) & 0.139 (0.033) & \bf{0.982 (0.203)} \\
MFM-GF  & 0.406 (0.101) & 0.191 (0.084) & 1.395 (0.442) & 0.425 (0.089) & 0.202 (0.077) & 1.406 (0.416) \\
\rowcolor{gray!15}
DW-RWMH  & 0.451 (0.101) & 0.177 (0.068) & 1.703 (0.404) & 0.550 (0.053) & 0.229 (0.055) & 1.476 (0.216) \\
\rowcolor{gray!15}
DW-EKCN  & 0.086 (0.013) & 0.045 (0.010) & 1.396 (0.199) & \bf{0.220 (0.049)} & \bf{0.101 (0.021)} & 1.081 (0.293) \\
\bottomrule
\end{tabular}%
}
\label{tab:t1-results}
\end{table}

\begin{table}[t]
\centering
\caption{Results in the high nonlinearity regime ($T=5$). Best results in {\bf{bold}}, second best \underline{underlined}. Gradient based methods are displayed in gray.}
\resizebox{\textwidth}{!}{%
\begin{tabular}{l ccc ccc}
\toprule
\textbf{Observation noise} & \multicolumn{3}{c}{Low noise} & \multicolumn{3}{c}{High noise} \\
\cmidrule(lr){2-4} \cmidrule(lr){5-7}
\textbf{Methods} & Rel $\ell_2\downarrow$ & CRPS $\downarrow$ & SSR$\rightarrow 1$ & Rel $\ell_2\downarrow$ & CRPS $\downarrow$ & SSR$\rightarrow 1$ \\
\midrule
\color{gray} DPS  & \color{gray} 0.472 (0.084) & \color{gray} \underline{0.201 (0.069)} & \color{gray} 1.834 (0.285) & \color{gray} 0.506 (0.084) & \color{gray} \underline{0.233 (0.079)} & \color{gray} 1.244 (0.250) \\
\color{gray} MMPS & \color{gray} 0.560 (0.090) & \color{gray} 0.276 (0.077) & \color{gray} 1.572 (0.382) & \color{gray} 0.523 (0.070) & \color{gray} 0.241 (0.069) & \color{gray} 1.461 (0.292) \\
\color{gray} DAPS & \color{gray} 0.639 (0.071) & \color{gray} 0.327 (0.071) & \color{gray} 1.510 (0.281) & \color{gray} 0.697 (0.057) & \color{gray} 0.351 (0.075) & \color{gray} 1.358 (0.258) \\
EKI  & 1.059 (0.069) & 0.754 (0.104) & 0.031 (0.006) & 0.865 (0.086) & 0.522 (0.069) & 0.267 (0.041) \\
EnKG               & \underline{0.418 (0.153)} & 0.279 (0.140) & 0.098 (0.035) & 0.812 (0.157) & 0.523 (0.070) & 0.084 (0.006) \\
FK Steering  & 1.383 (0.164) & 1.000 (0.068) & 0.016 (0.020) & 0.991 (0.053) & 0.511 (0.060) & 1.084 (0.127) \\
ESS-Flow  & 0.506 (0.123) & 0.242 (0.104) & \bf{0.987 (0.242)} & 0.491 (0.105) & 0.233 (0.096) & \bf{0.987 (0.230)} \\
Blade & 0.421 (0.309) & 0.239 (0.180) & 2.312 (0.899) & \underline{0.442 (0.180)} & \underline{0.231 (0.142)} & 1.326 (0.637) \\
MFM-GF  & 0.549 (0.087) & 0.262 (0.084) & 1.245 (0.325) & 0.547 (0.079) & 0.261 (0.080) & 1.253 (0.317) \\
\rowcolor{gray!15}
DW-RWMH  & 0.810 (0.045) & 0.398 (0.071) & 1.163 (0.156) & 0.815 (0.040) & 0.396 (0.066) & 1.155 (0.145) \\
\rowcolor{gray!15}
DW-EKCN  & \bf{0.247 (0.239)} & \bf{0.152 (0.190)} & \underline{1.054 (0.473)} & \bf{0.435 (0.119)} & \bf{0.207 (0.098)} & \underline{0.945 (0.279)} \\
\bottomrule
\end{tabular}%
}
\label{tab:t5-results}
\end{table}

\begin{figure}[h]
    \centering
    \includegraphics[width=1.0\textwidth]{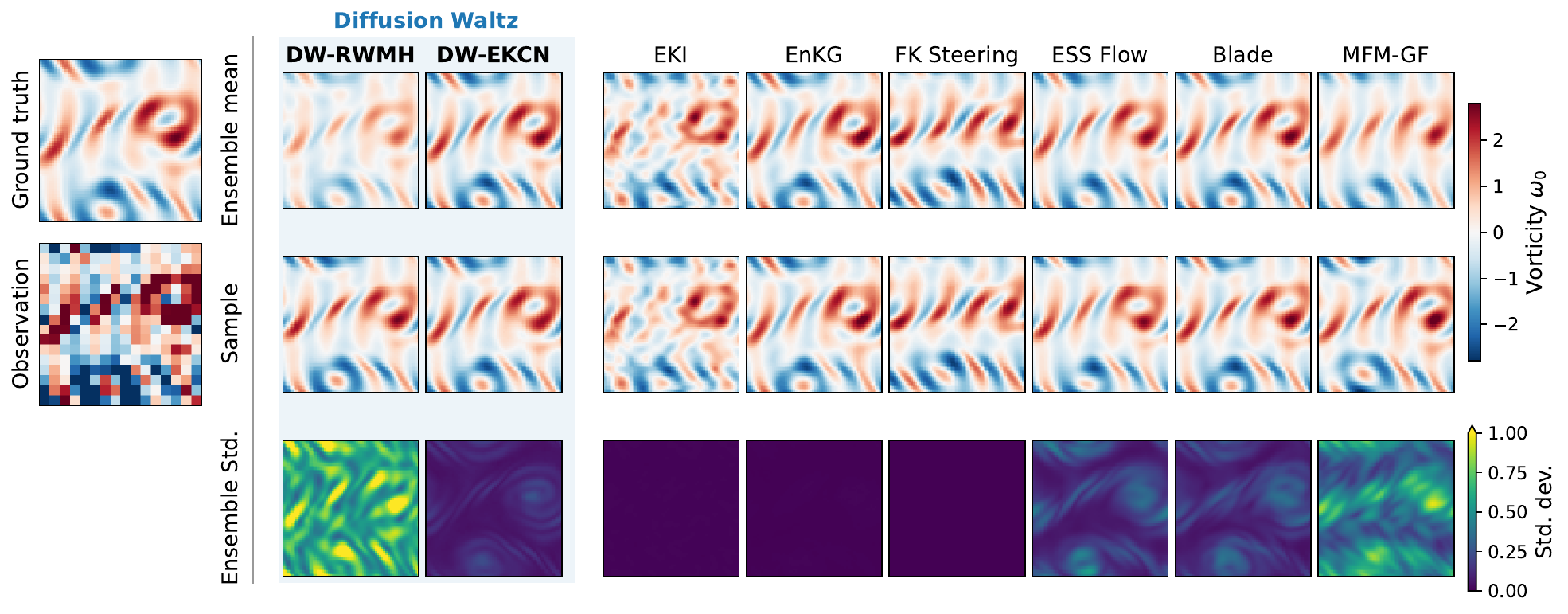}
    \caption{Comparison of diffusion waltz and baseline methods on the Navier-Stokes (NS) initial condition recovery problem in the low nonlinearity ($T=1$) / low noise ($\sigma_y / \sigma_0 = 0.1$) regime.}
    \label{fig:results-t1-s0.1}
\end{figure}

\begin{figure}[h]
    \centering
    \includegraphics[width=1.0\textwidth]{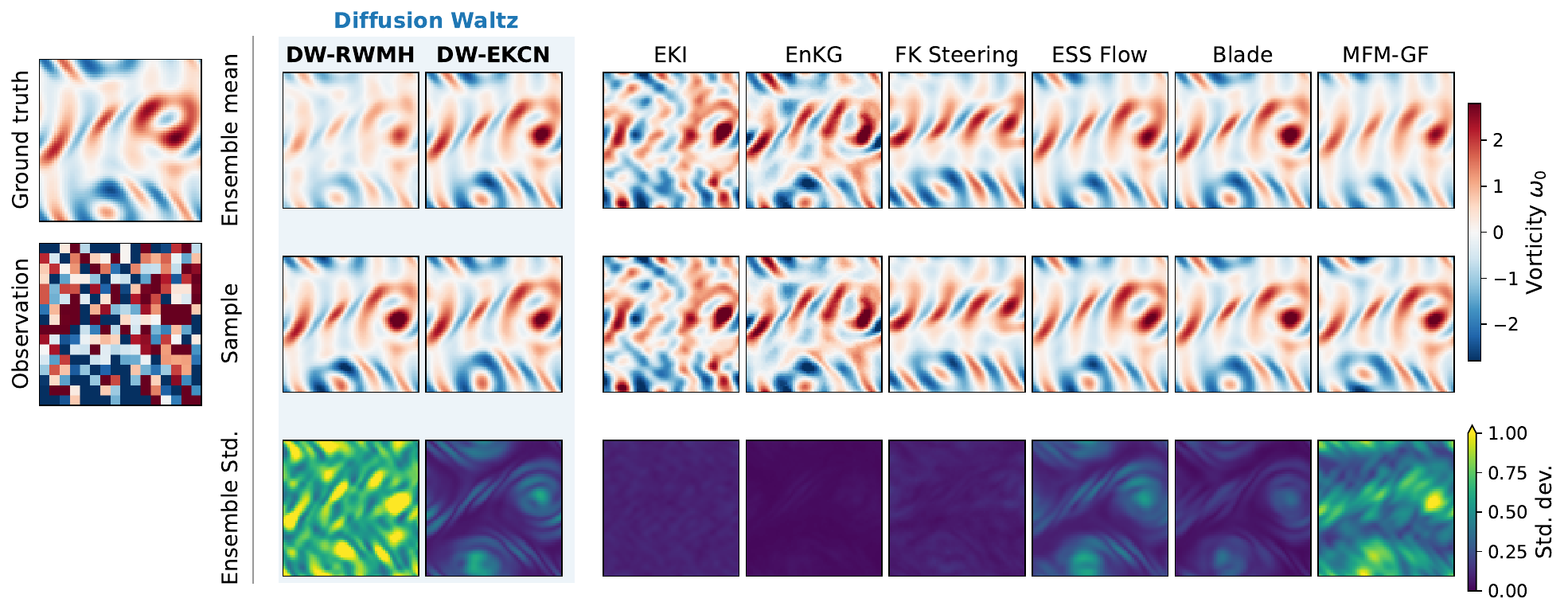}
    \caption{Comparison of diffusion waltz and baseline methods on the Navier-Stokes (NS) initial condition recovery problem in the low nonlinearity ($T=1$) / high noise ($\sigma_y / \sigma_0 = 1.0$) regime.}
    \label{fig:results-t1-s1.0}
\end{figure}

\begin{figure}[h]
    \centering
    \includegraphics[width=1.0\textwidth]{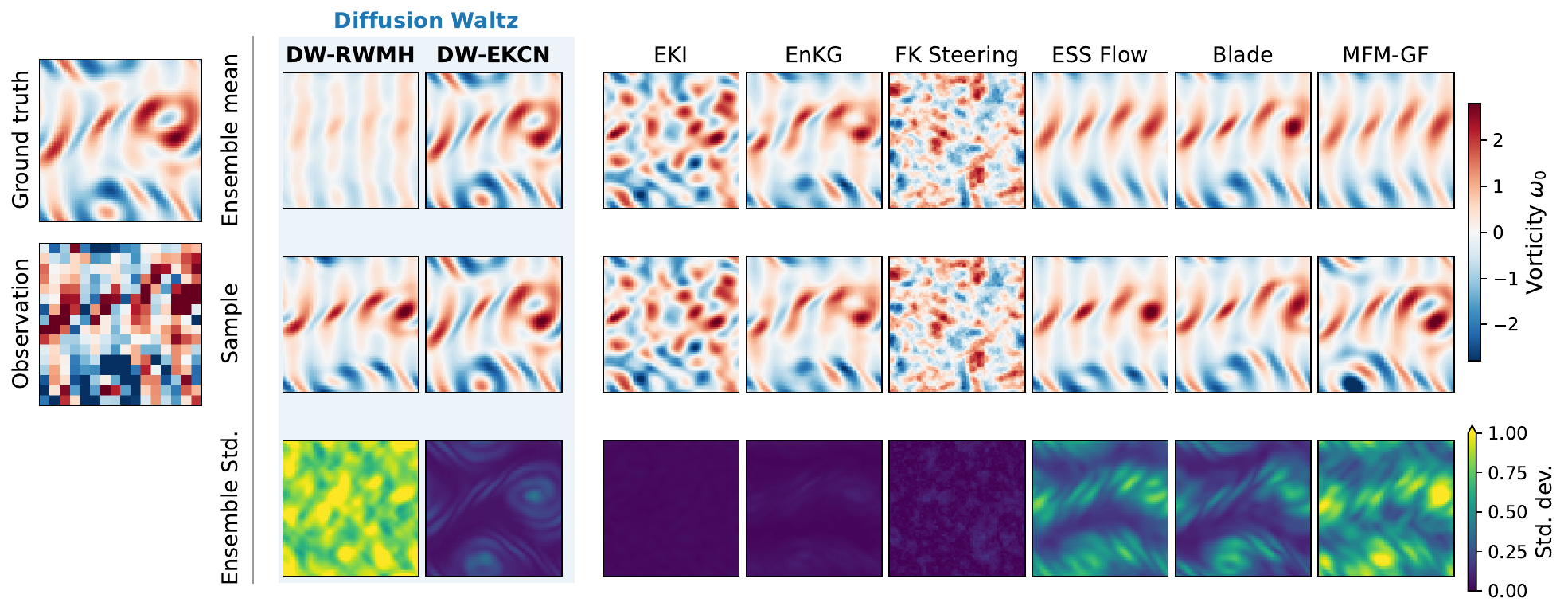}
    \caption{Comparison of diffusion waltz and baseline methods on the Navier-Stokes (NS) initial condition recovery problem in the high nonlinearity ($T=5$) / low noise ($\sigma_y / \sigma_0 = 0.1$) regime.}
    \label{fig:results-t5-s0.1}
\end{figure}


\begin{figure}[t]
    \centering
    \begin{subfigure}[b]{0.48\linewidth}
        \includegraphics[width=\linewidth]{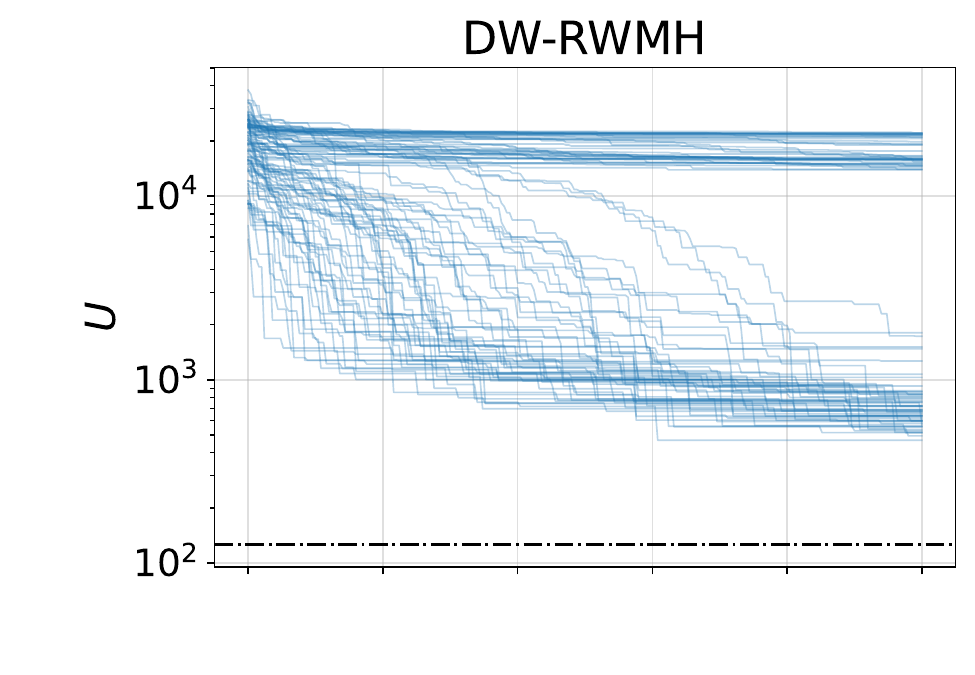}
        \caption{\quad $T=1$, \quad $\sigma_y / \sigma_0=0.1$}
    \end{subfigure}
    \hfill
    \begin{subfigure}[b]{0.48\linewidth}
        \includegraphics[width=\linewidth]{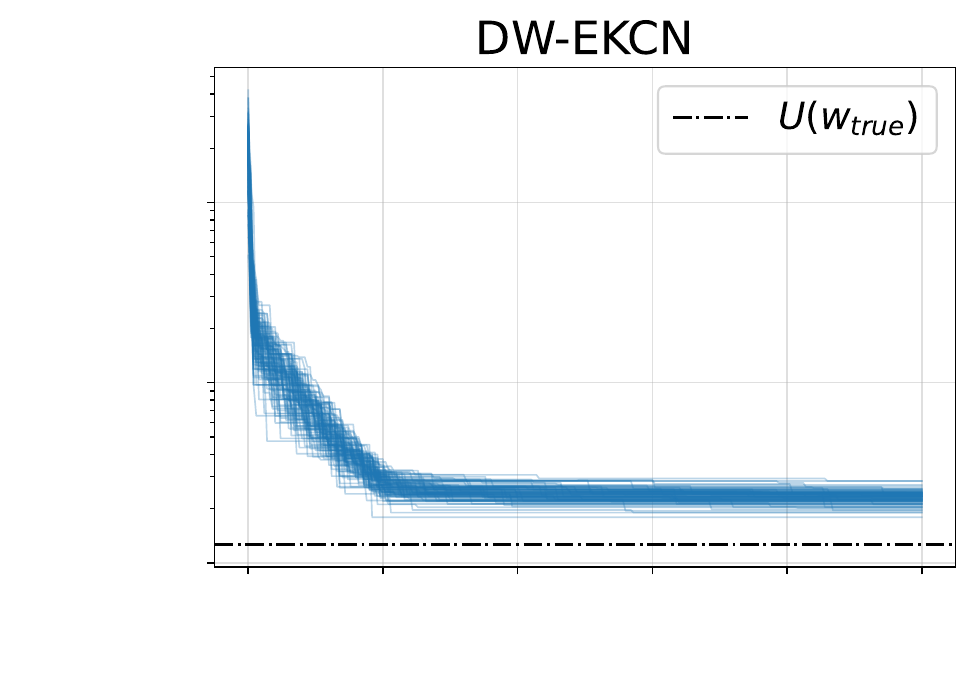}
        \caption{\quad $T=1$, \quad $\sigma_y / \sigma_0=0.1$}
    \end{subfigure}

    \vspace{0.5em}
    \begin{subfigure}[b]{0.48\linewidth}
        \includegraphics[width=\linewidth]{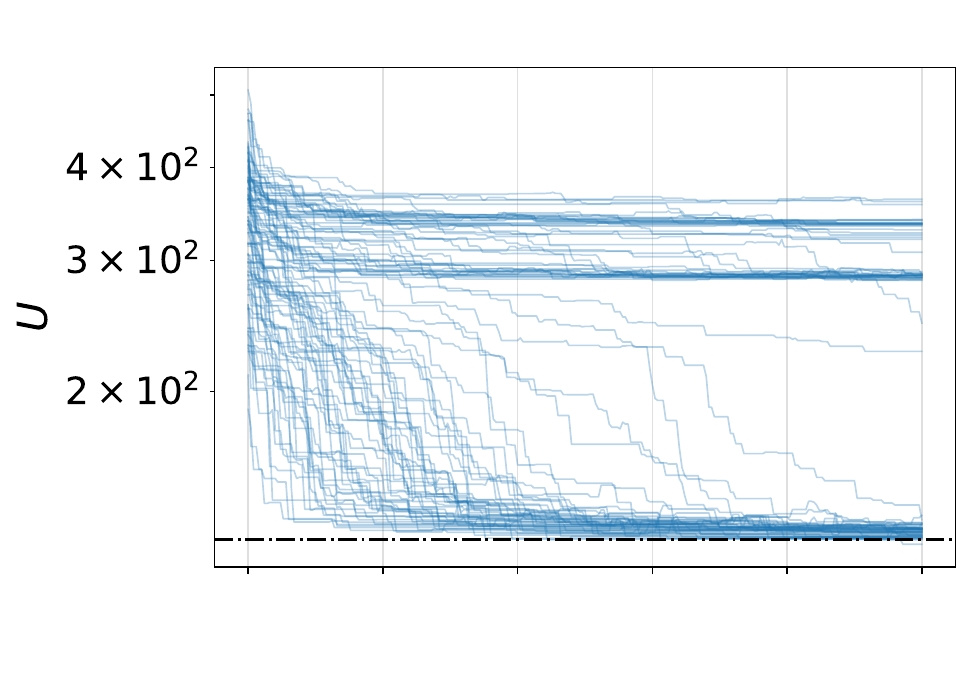}
        \caption{\quad $T=1$, \quad $\sigma_y / \sigma_0=1.0$}
    \end{subfigure}
    \hfill
    \begin{subfigure}[b]{0.48\linewidth}
        \includegraphics[width=\linewidth]{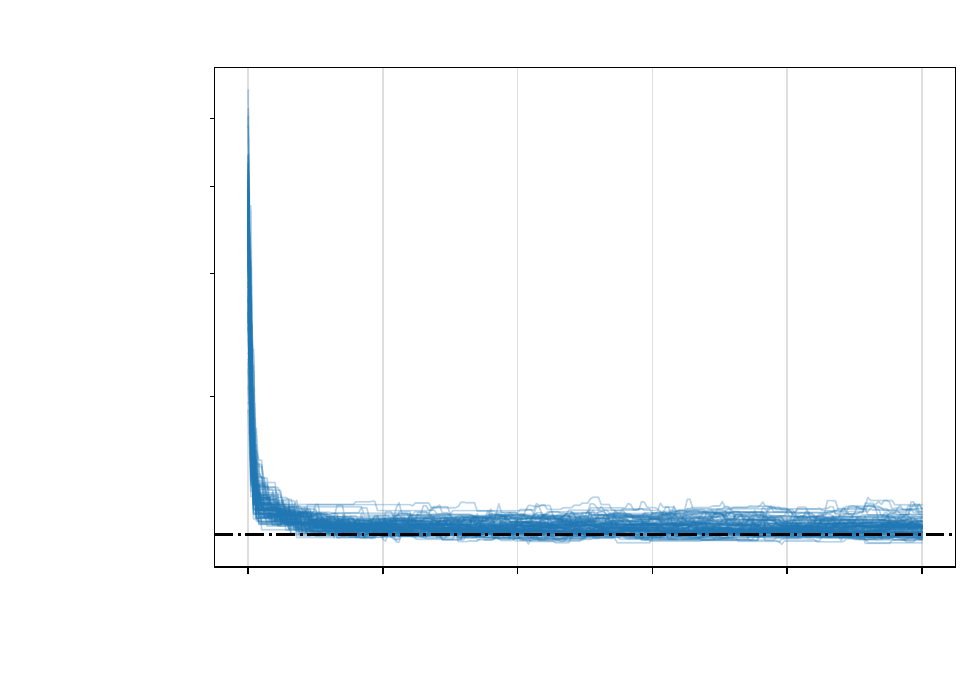}
        \caption{\quad $T=1$, \quad $\sigma_y / \sigma_0=1.0$}
    \end{subfigure}

    \vspace{0.5em}
    \begin{subfigure}[b]{0.48\linewidth}
        \includegraphics[width=\linewidth]{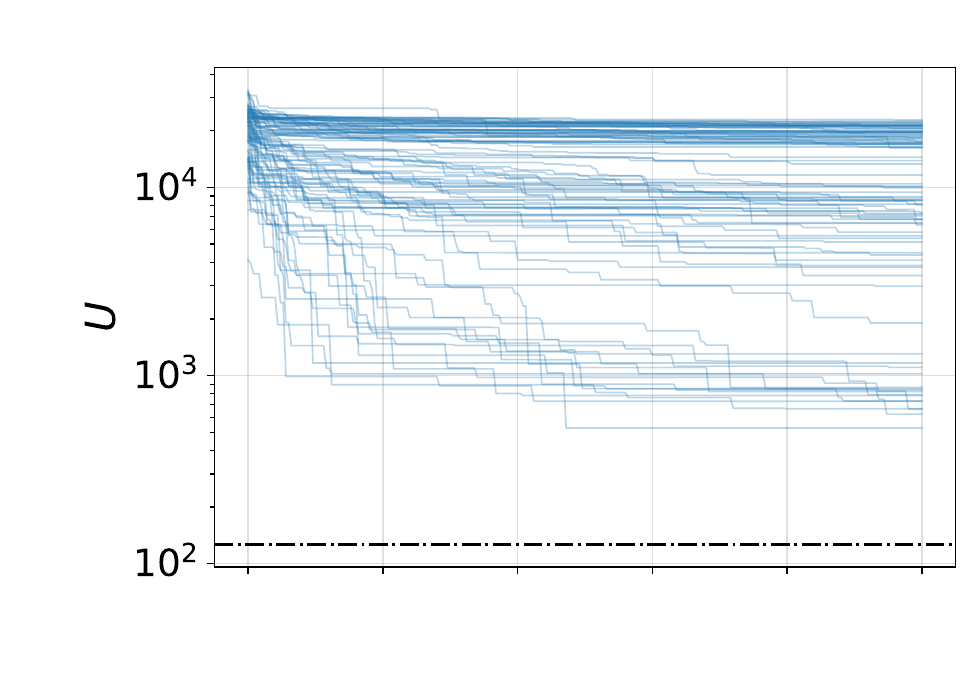}
        \caption{\quad $T=5$, \quad $\sigma_y / \sigma_0=0.1$}
    \end{subfigure}
    \hfill
    \begin{subfigure}[b]{0.48\linewidth}
        \includegraphics[width=\linewidth]{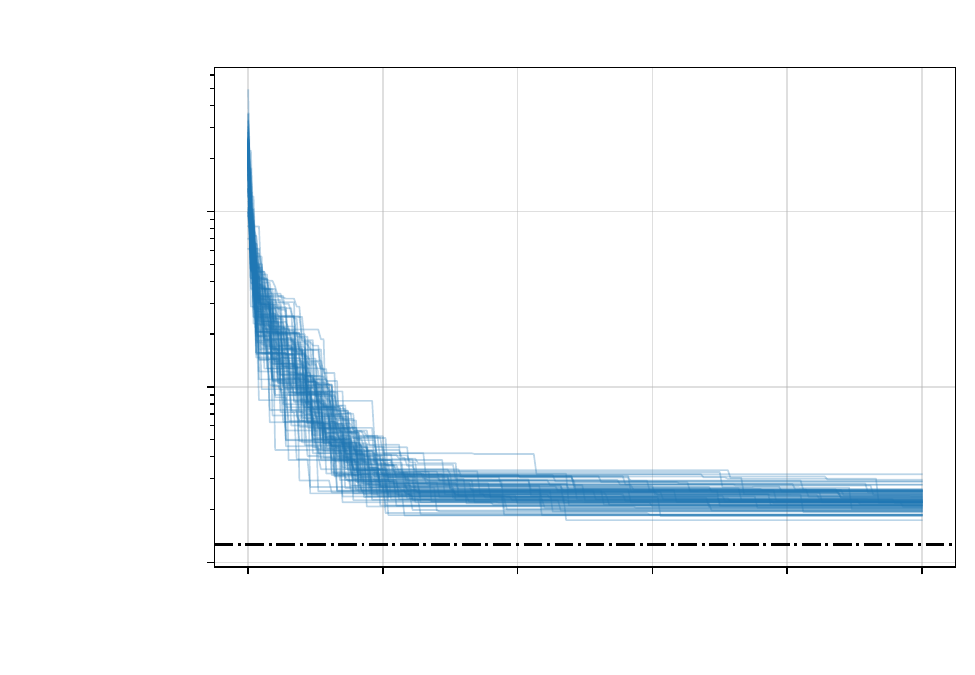}
        \caption{\quad $T=5$, \quad $\sigma_y / \sigma_0=0.1$}
    \end{subfigure}

    \vspace{0.5em}
    \begin{subfigure}[b]{0.48\linewidth}
        \includegraphics[width=\linewidth]{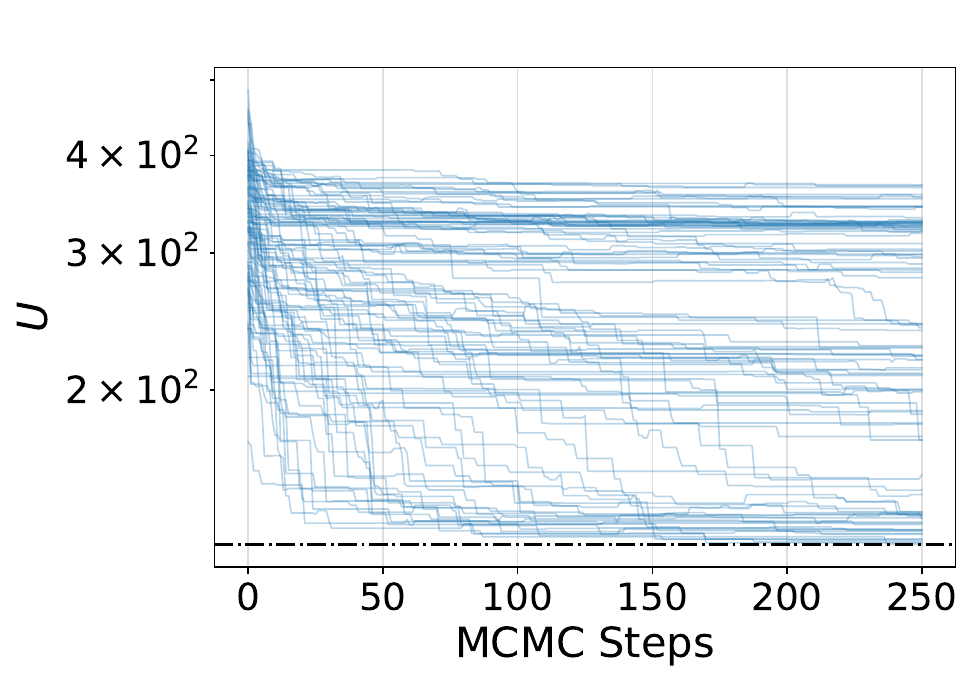}
        \caption{\quad $T=5$, \quad $\sigma_y / \sigma_0=1.0$}
    \end{subfigure}
    \hfill
    \begin{subfigure}[b]{0.48\linewidth}
        \includegraphics[width=\linewidth]{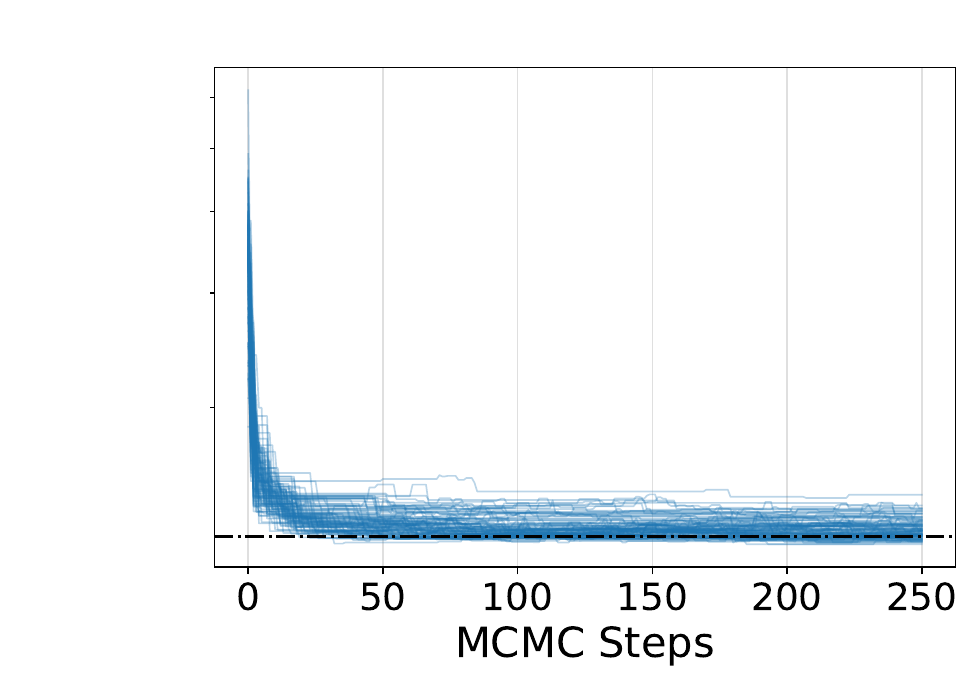}
        \caption{\quad $T=5$, \quad $\sigma_y / \sigma_0=1.0$}
    \end{subfigure}

    \caption{MCMC traces for the potential $U(x_0) := -\log p(y \mid x_0)$. We compare DW-RWMH (left) and DW-EKCN (right), across all four regimes. Many chains in DW-RWMH get stuck, unable to find a path to high likelihood/low potential regions, leading to slow convergence. Conditional noising in DW-EKCN alleviates this problem, leading to rapid convergence within the first 50 steps.}
    \label{fig:ns-mcmc-trace}
\end{figure}

\begin{figure}[t]
    \centering
    \begin{subfigure}[b]{0.48\linewidth}
        \includegraphics[width=\linewidth]{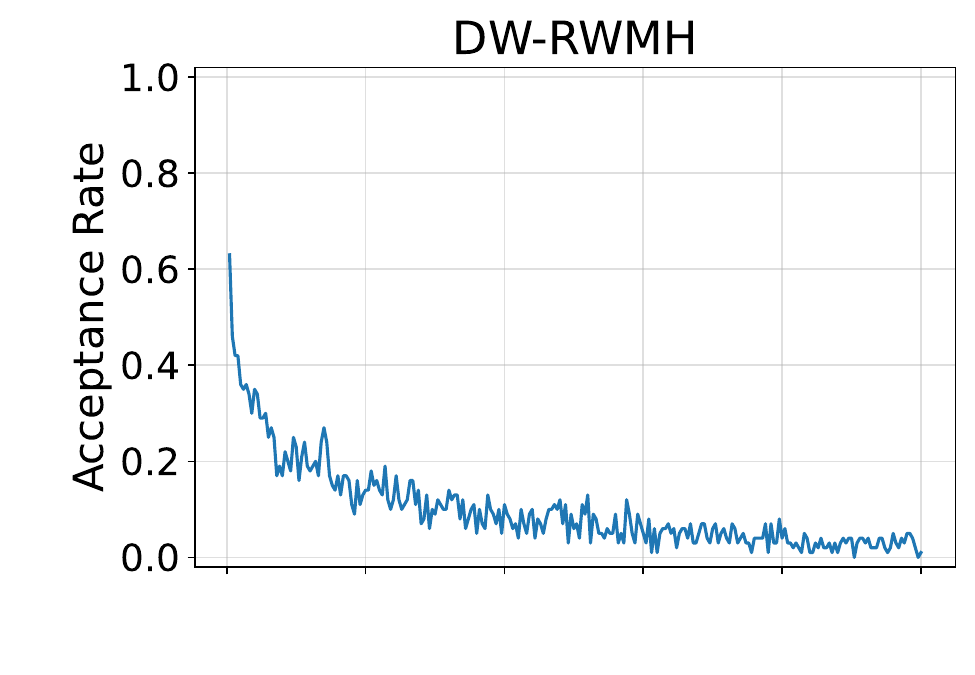}
        \caption{\quad $T=1$, \quad $\sigma_y / \sigma_0=0.1$}
    \end{subfigure}
    \hfill
    \begin{subfigure}[b]{0.48\linewidth}
        \includegraphics[width=\linewidth]{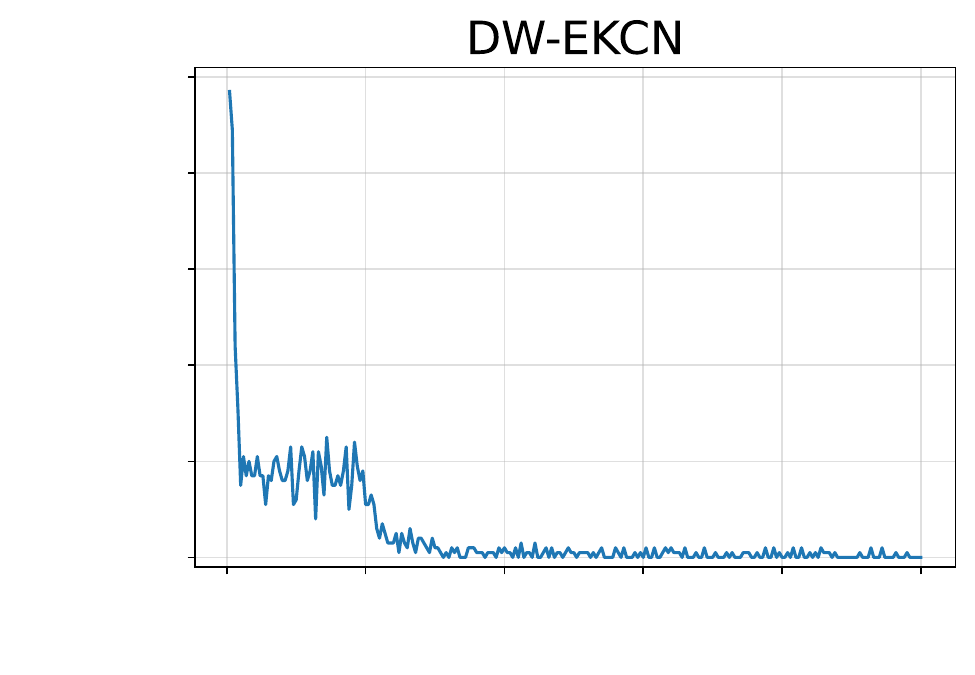}
        \caption{\quad $T=1$, \quad $\sigma_y / \sigma_0=0.1$}
    \end{subfigure}

    \vspace{0.5em}
    \begin{subfigure}[b]{0.48\linewidth}
        \includegraphics[width=\linewidth]{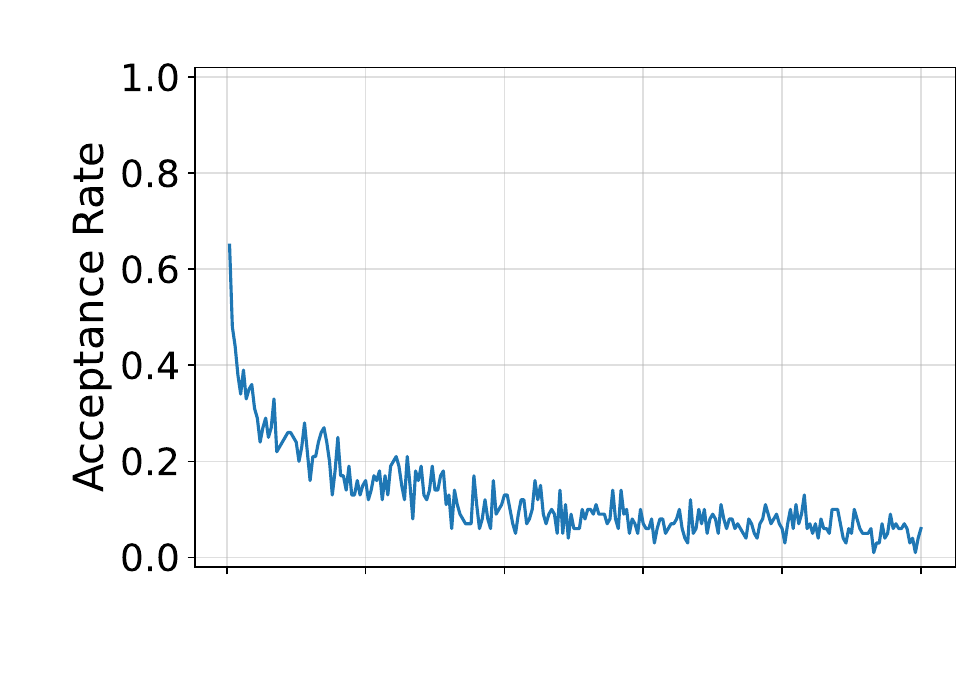}
        \caption{\quad $T=1$, \quad $\sigma_y / \sigma_0=1.0$}
    \end{subfigure}
    \hfill
    \begin{subfigure}[b]{0.48\linewidth}
        \includegraphics[width=\linewidth]{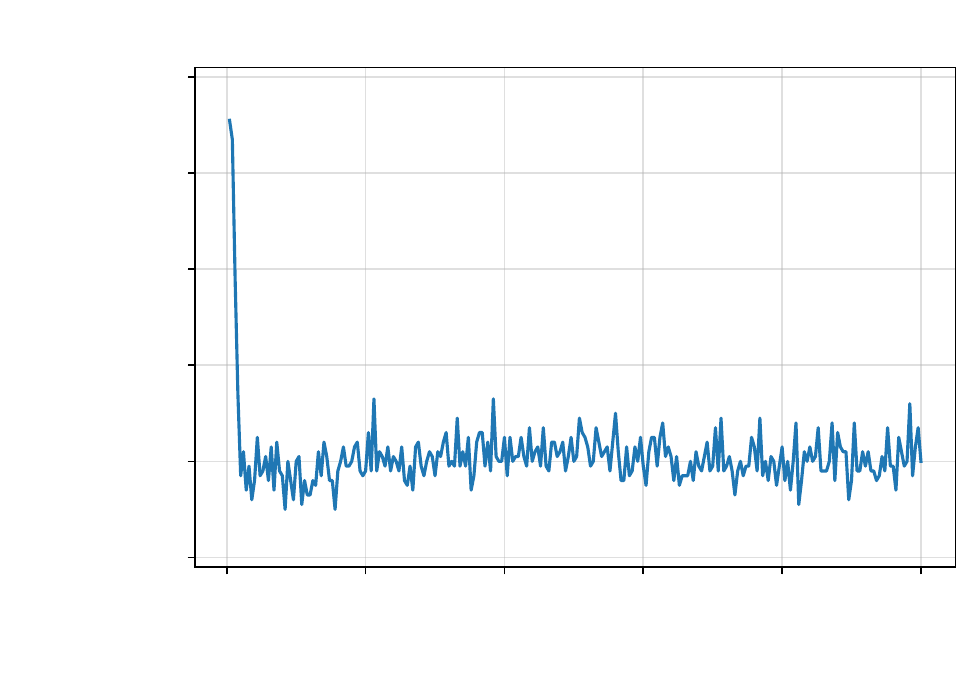}
        \caption{\quad $T=1$, \quad $\sigma_y / \sigma_0=1.0$}
    \end{subfigure}

    \vspace{0.5em}
    \begin{subfigure}[b]{0.48\linewidth}
        \includegraphics[width=\linewidth]{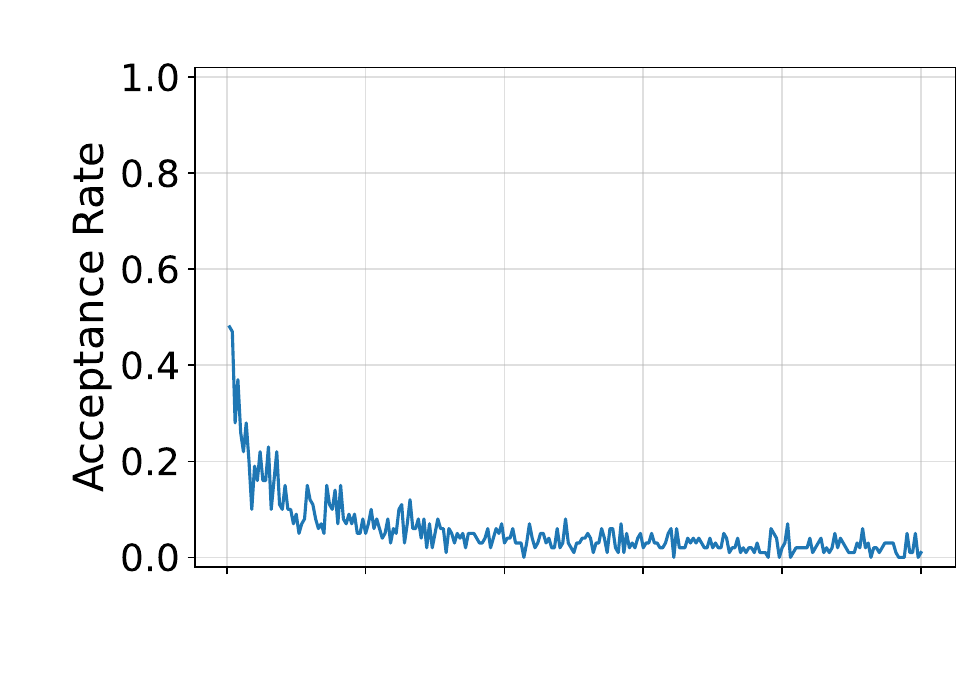}
        \caption{\quad $T=5$, \quad $\sigma_y / \sigma_0=0.1$}
    \end{subfigure}
    \hfill
    \begin{subfigure}[b]{0.48\linewidth}
        \includegraphics[width=\linewidth]{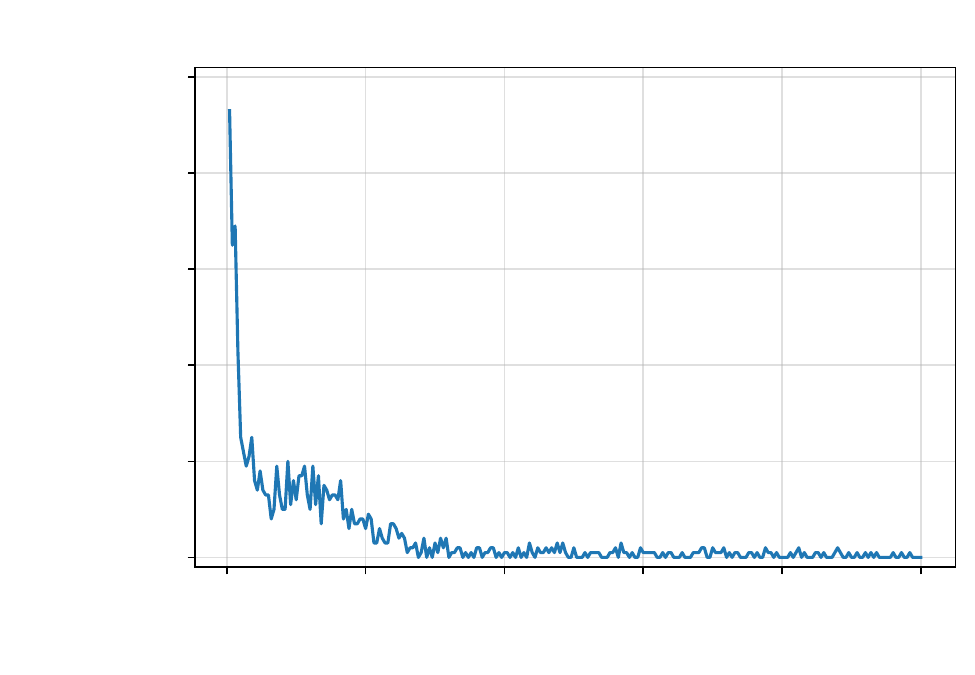}
        \caption{\quad $T=5$, \quad $\sigma_y / \sigma_0=0.1$}
    \end{subfigure}

    \vspace{0.5em}
    \begin{subfigure}[b]{0.48\linewidth}
        \includegraphics[width=\linewidth]{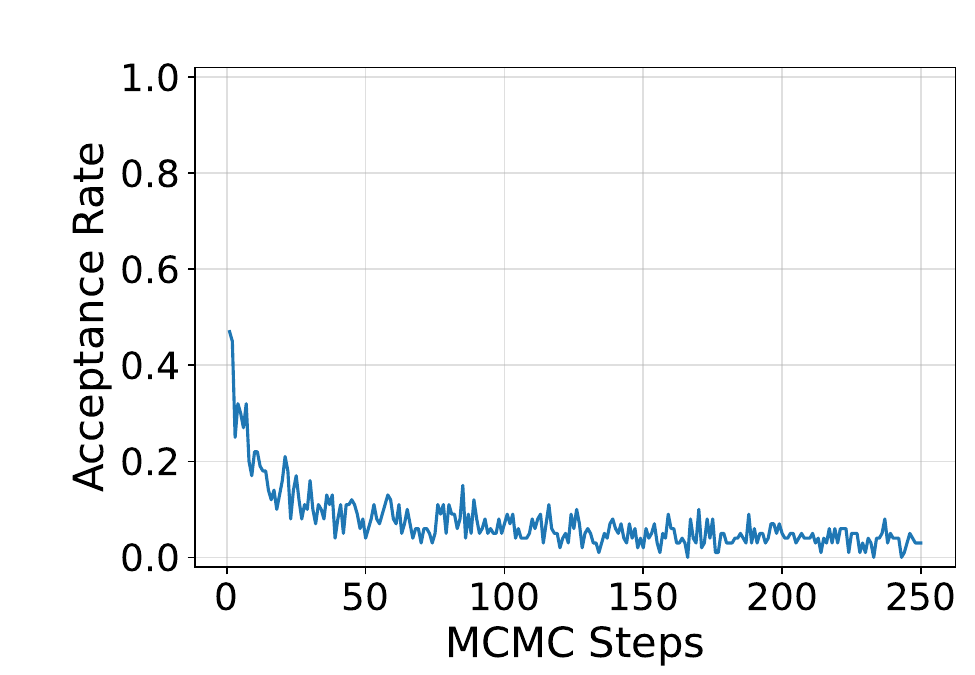}
        \caption{\quad $T=5$, \quad $\sigma_y / \sigma_0=1.0$}
    \end{subfigure}
    \hfill
    \begin{subfigure}[b]{0.48\linewidth}
        \includegraphics[width=\linewidth]{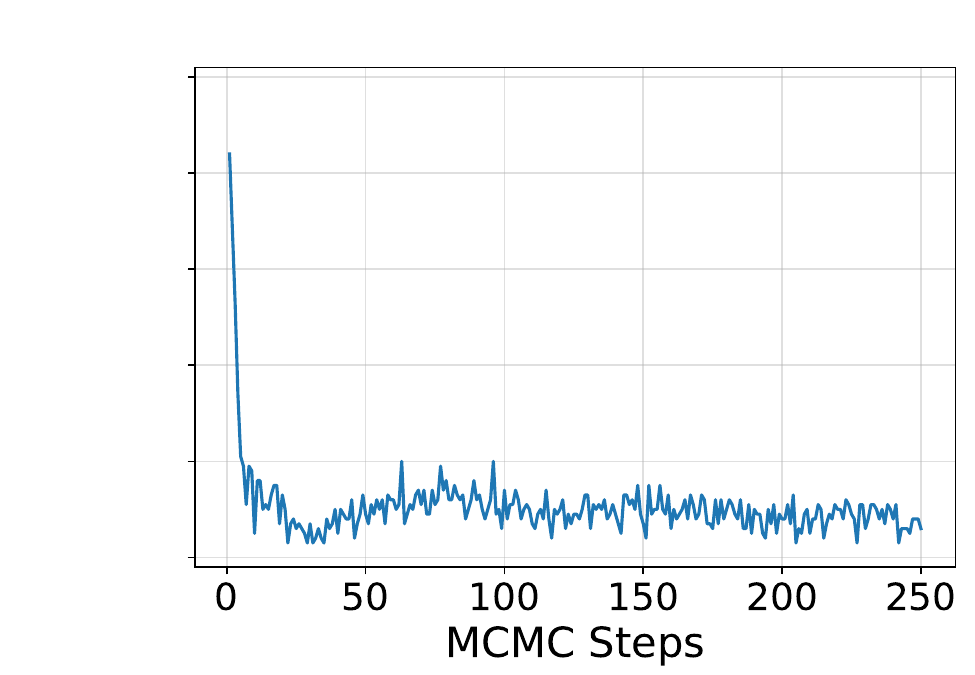}
        \caption{\quad $T=5$, \quad $\sigma_y / \sigma_0=1.0$}
    \end{subfigure}

    \caption{Instantaneous acceptance rates for DW-RWMH (left) and DW-EKCN (right), across all four regimes. DW-RWMH shows steady decay with MCMC steps. In the low noise regime, DW-EKCN has high acceptance rates initially, fluctuates around $20\%$ while the chains converge, then hits $\approx 0 \%$. In the high noise regime, acceptance rate is high initially, then immediately relaxes to $10 \sim 20\%$.}
    \label{fig:ns-acc-rate}
\end{figure}



\end{document}